\setlist{leftmargin=-10.5mm}
\setlist{nosep,after=\vspace{\baselineskip}}
\newcommand{\dm}{DM}
\newcommand{\ie}{\textit{i.e.}}
\newcommand{\eg}{\textit{e.g.}}
\newcommand{\st}{\textit{s.t.}}
\newcommand{\It}{I_t}
\newcommand{\bmu}{\bm{\mu}}
\newcommand{\hbmu}{\hat{\bm{\mu}}}
\newcommand{\bnu}{\bm{\nu}}
\newcommand{\A}{\mathcal{A}}
\newcommand{\B}{\mathcal{B}}
\renewcommand{\C}{\mathcal{C}}
\newcommand{\D}{\mathcal{D}}
\newcommand{\E}{\mathbb{E}}
\newcommand{\V}{\mathcal{V}}
\renewcommand{\a}{\bm{a}}
\renewcommand{\b}{\bm{b}}
\renewcommand{\c}{\bm{c}}
\newcommand{\f}{\bm{f}}
\renewcommand{\r}{\bm{r}}
\newcommand{\w}{\bm{w}}
\newcommand{\x}{\bm{x}}
\newcommand{\z}{\bm{z}}
\newcommand{\argmax}{\textnormal{argmax}}
\newcommand{\argmin}{\textnormal{argmin}}
\newcommand{\etc}{\textnormal{EC algorithm}}
\newcommand{\ttc}{\textnormal{TTC algorithm}}
\newcommand{\mttc}{\textnormal{MTTC algorithm}}
\newtheorem{assumption}{Assumption}
\newtheorem{remark}{Remark}
\begin{document}
\title[Learning to Route Efficiently with End-to-End Feedback]{Learning to Route Efficiently with End-to-End Feedback: The Value of Networked Structure}
%\titlenote{Produces the permission block, and copyright information}
%\subtitle{Extended Abstract}
%\subtitlenote{The full version of the author's guide is available as \texttt{acmart.pdf} document}

\author{Ruihao Zhu}
%\authornote{Ruihao Zhu}
\affiliation{%
  \institution{IDSS, MIT}
  %\streetaddress{P.O. Box 1212}
  %\city{Dublin} 
  %\state{Ohio} 
  %\postcode{43017-6221}
}
\email{rzhu@mit.edu}

\author{Eytan Modiano}
%\authornote{The secretary disavows any knowledge of this author's actions.}
\affiliation{%
  \institution{LIDS, MIT}
  %\streetaddress{P.O. Box 1212}
  %\city{Dublin} 
  %\state{Ohio} 
  %\postcode{43017-6221}
}
\email{modiano@mit.edu}

% The default list of authors is too long for headers}
\renewcommand{\shortauthors}{Zhu and Modiano}
\begin{abstract}
We introduce efficient algorithms which achieve nearly optimal regrets for the problem of stochastic online shortest path routing with end-to-end feedback. The setting is a natural application of the combinatorial stochastic bandits problem, a special case of the linear stochastic bandits problem. We show how the difficulties posed by the large scale action set can be overcome by the networked structure of the action set. Our approach presents a novel connection between bandit learning and shortest path algorithms. Our main contribution is an adaptive exploration algorithm with nearly optimal instance-dependent regret for any directed acyclic network. We then modify it so that nearly optimal worst case regret is achieved simultaneously. Driven by the carefully designed Top-Two Comparison (TTC) technique, the algorithms are efficiently implementable. We further conduct extensive numerical experiments to show that our proposed algorithms not only achieve superior regret performances compared to existing algorithms, but also reduce the runtime drastically.
\end{abstract}

%
% The code below should be generated by the tool at
% http://dl.acm.org/ccs.cfm
% Please copy and paste the code instead of the example below. 
%

%\ccsdesc[500]{Computer systems organization~Embedded systems}
%\ccsdesc[300]{Computer systems organization~Redundancy}
%\ccsdesc{Computer systems organization~Robotics}
%\ccsdesc[100]{Networks~Network reliability}

%\keywords{ACM proceedings}

\maketitle

\section{Introduction}
\label{intro}
We study the problem of shortest path routing over a network, where the link delays are not known in advance. When delays are known, it is possible to compute the shortest path in polynomial time via the celebrated Dijkstra's algorithm \cite{D59} or the Bellman-Ford algorithm \cite{B58}. However, link delays are often unknown, and evolve over time according to some unknown stochastic process. Moreover, there are many real-world scenarios in which only the end-to-end delays are observable. For example, overlay network is a communication network architecture that integrates controllable overlay nodes into an uncontrollable underlay network of legacy devices. It is generally difficult to ensure individual link delay feedback when routing in an overlay network as the underlay nodes are not necessarily cooperative.  Fig. \ref{fig:overlay_eg} shows a very simple overlay network, where the only overlay nodes are the source node (node 1) and destination node (node 6); while the nodes within the dotted circle are underlay nodes. Here, the Decision Maker (\dm) can choose to route the packets from one of the five paths available, namely $(1,2,3,6),(1,2,5,6),(1,2,3,5,6),(1,2,4,5,6),$ and $(1,4,5,6)$. If it picks path $(1,4,5,6),$ it can only get the realized delay of the whole path $(1,4,5,6),$ but not any of the realized delays of link $(1,4),(4,5),$ or $(5,6).$ These uncertainties and the network architectural constraints make the problem fall into the category of stochastic online shortest path routing with end-to-end feedback \cite{TZCPJ18}.
\begin{figure}[!ht]
	\centering
	\includegraphics[width=8.5cm,height=5.5cm]{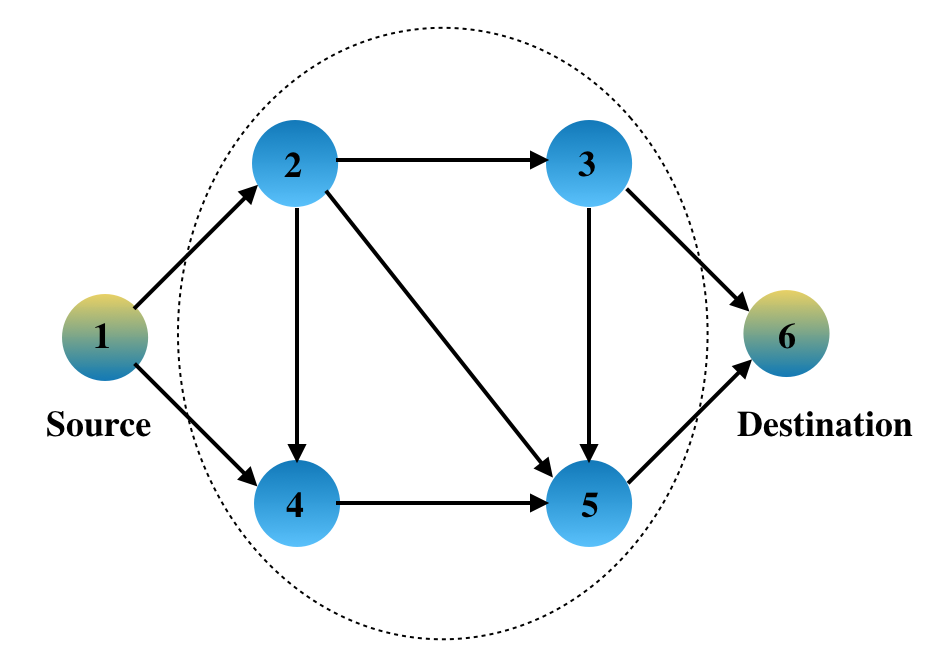}
	\vspace{-8mm}
	\caption{A toy example of an overlay network. Here, only the source and destination are overlay nodes. All other nodes belong to the underlay network.}
	\label{fig:overlay_eg}
\end{figure}

Stochastic online shortest path routing with end-to-end feedback is one of the most fundamental real-time decision-making problems. In its canonical form, a \dm~is presented a network with $d$ links, each link's delay is a random variable, following an unknown stochastic process with unknown fixed mean over $T$ rounds. In each round, a packet arrives to the \dm, and it chooses a path to route the packet from the source to the destination. The packet then incurs a delay, which is the sum of the delays realized on the associated links. Afterwards, the \dm~learns the end-to-end delay, \ie, the realized delay of the path, but the individual link's delay remains concealed. This is often called the \emph{bandit-feedback} setting \cite{TZCPJ18,KWAS15}. The \dm's goal is to design a routing policy that minimizes the cumulative expected delay. When the \dm~has full knowledge of the delay distributions, it would always choose to route the packets through the path with shortest expected delay. With that in mind, a reasonable performance metric for evaluating the policy is the \emph{expected regret}, defined to be the expected total delay of routing through the actual paths selected by the \dm~minus the expected total delay of routing through the path with shortest expected delay. In order to minimize the regret, the \dm~needs to learn the delay distributions on-the-fly. One viable approach to estimate the path delays is to inspect the end-to-end delays experienced by packets sent on different paths. This gives rise to an \emph{exploration-exploitation dilemma}. On one hand, the \dm~is not able to estimate the delay of an under-explored path; while on the other, the \dm~wants to send the the packet via the estimated shortest path to greedily minimize the cumulative delay incurred by the packets. 

The \emph{Upper Confidence Bound} (UCB) algorithm, following the \emph{Optimism-in-the-Face of Uncertainty} (OFU) principle, is one of the most prevalent strategies to deal with the exploration-exploitation dilemma. In the ordinary stochastic multi-armed bandits (MAB) settings, the UCB algorithm proposes a very intuitive policy framework, that \dm~should select actions by maximizing over rewards estimated from previous data but only after biasing each estimate according to its uncertainty. Simply put, one should choose the action that maximizes the ``mean plus confidence interval." Treating the inverse of delay as reward, a naive application of UCB algorithm to stochastic online shortest path routing can result in regret bounds and computation time that scale linearly with the number of paths. For small scale overlay networks, this achieves low regret efficiently. However, networks often have exponentially many paths, and direct implementation of the UCB algorithm is neither computationally efficient nor regret optimal. In the \emph{combinatorial semi-bandits} setting, the realized delay of each individual link on the chosen path is revealed. The authors of \cite{GKJ12} take the advantage of the individual feedbacks, and propose a solution for the problem by computing the UCB of each link. The authors of \cite{KWAS15,TZCPJ18} further design algorithms to match the regret lower bounds. Unfortunately, algorithms proposed for semi-bandit feedback setting cannot be extended to the bandit feedback setting as individual link feedback is not available.

When only end-to-end/bandit feedback is available, the authors of \cite{LZ12} propose algorithms with regret that has optimal dependence on the total number of rounds\footnote{The regret has sub-optimal dependence on the size of the network.}. But the algorithm requires the \dm~to enumerate over the path set to select path in each round. This degrades the practicality of the algorithms significantly, especially when deployed in large-scale networks. Existing works have also tried to investigate the problem through the more general linear stochastic bandits setting, see \eg, \cite{DHK08,AYAS09,AYPS11}. Nevertheless, the proposed algorithms again suffer from high computational complexity \cite{DHK08}. Even worse, existing works in linear stochastic bandits literature ignore the network structure of the action set. Hence, only sub-optimal regret bounds are achieved. 

%To address the above mentioned shortcomings in efficiency and performance brought by the large size of action set and end-to-end feedback, existing works have tried to investigate the stochastic online shortest path routing problem with end-to-end feedback through the lens of linear stochastic bandits, see \eg, \cite{DHK08,AYAS09,AYPS11}. There, tools from linear regression are employed to estimate the reward of each action, and the action with highest UCB is picked in each round. Although the regret of the algorithms scale sub-linearly with the size of the action set, implementing the OFU algorithm requires the \dm~to solve a bilinear optimization problem that is polynomial time equivalent to the NP-hard negative definite linearly constrained quadratic programming problem \cite{DHK08}. This degrades the practicality of the algorithms significantly, especially when deployed in large-scale networks. Even worse, existing works in linear stochastic bandits literature examine the problem with a very general setup, \ie, ignoring the network structure of the action set. Hence, only sub-optimal regret bounds are achieved. 
 
As a matter of fact, the problem of stochastic online shortest path routing with end-to-end feedback falls into the category of combinatorial stochastic bandits, a special case of linear stochastic bandits with action set constrained to be subset of $\{0,1\}^d.$ However, finding efficient algorithms for combinatorial/linear stochastic bandits with (nearly) optimal regret remains as an open problem \cite{B16}. All of the above mentioned findings motivate us to exploit the networked structure of the action set to design efficient algorithms for the stochastic online shortest path problem with end-to-end feedback. Specifically, we aim at answering the following question:

\emph{Can we leverage the power of the network structure to design efficient algorithms that achieve (nearly) optimal instance-dependent and worst case regret bounds simultaneously for stochastic online shortest path routing under bandit-feedback?}

In this paper, we give an affirmative answer to the above question. We start with algorithms for the stochastic online shortest path routing problem with identifiable network structure, and gradually remove the extra assumptions to arrive at the most general case. Specifically, our contributions can be summarized as follows:
\begin{itemize}
	\item Assuming network identifiability, we first develop an efficient non-adaptive exploration algorithm with nearly optimal instance-dependent regret and sub-optimal worst case regret when the minimum gap \footnote{The concepts of network identifiability, instance-dependent regret, worst case regret, and minimum gap will be defined in Section \ref{sec:model} and \ref{sec:exploration_basis}.} is known.
	\item The main contribution is an adaptive exploration algorithm with nearly optimal instance-dependent regret without any knowledge of the minimum gap. Coupled with the novel Top-Two Comparison technique, the algorithms can be efficiently implemented. We also propose a simple modification for the algorithm to achieve nearly optimal worst case regret simultaneously.
	\item Complemented with an algorithm for finding basis in general networks, we show that our results can be applied to general networks without degrading the regret performances.
	\item We conduct extensive numerical experiments to validate  that our proposed algorithms not only achieve superior regret performances, but also reduce the runtime drastically.
\end{itemize}
The rest of the paper is organized as follows. In Section \ref{sec:model}, we describe the model of stochastic online shortest path routing with end-to-end feedback. In Section \ref{sec:exploration_basis}, we review the concepts of efficient exploration basis and make connections to network identifiability. Assuming network identifiability in Section \ref{sec:etc}, we propose the non-adaptive Explore-then-Commit algorithm to achieve nearly optimal instance-dependent regret when the minimum gap is known. In Section \ref{ttc}, we present the novel Top-Two Comparison and modified Top-Two Comparison algorithms to achieve nearly optimal instance-dependent and worst case regrets without any additional knowledge. In Section \ref{sec:general_networks}, we further study the problem without network identifiability, and propose an efficient algorithm with nearly optimal instance-dependent regret. In Section \ref{sec:numerical}, we present numerical results to demonstrate the empirical performances of the proposed algorithms. In Section \ref{sec:related_works}, we review related works in the bandits literature. In Section \ref{sec:conclusion}, we conclude our paper.

\section{Problem Formulation}
\label{sec:model}
\subsection{Notation}
Throughout the paper, all the vectors are column vectors by default unless specified otherwise. We define $[n]$ to be the set $\{1,2,\ldots,n\}$ for any positive integer $n.$ We use $\|\bm x\|_p$ to denote the $\ell_p$ norm of a vector $\bm x\in\Re^d.$ To avoid clutter, we often omit the subscript when we refer to the $\ell_2$ norm. For a positive definite matrix $A\in R^{d\times d}$, we use $\|\bm x\|_A$ to denote the  matrix norm $\sqrt{\bm{x}^{\top}A\bm x}$ of a vector $\bm x\in\Re^d.$ We also denote $x\wedge y$ as the minimum between $x,y\in\Re.$ We follow the convention to describe the growth rate using the notations $O(\cdot),\Omega(\cdot),$ and $\Theta(\cdot).$ If logarithmic factors are further ignored, we use $\widetilde{O}(\cdot),\widetilde{\Omega}(\cdot),$ and $\widetilde{\Theta}(\cdot),$ respectively. 
\subsection{Model}
Given a directed acyclic network $G$, an online stochastic shortest path problem is defined by a $d$-dimensional unknown but fixed mean link delay vector $\bmu\in[0,\mu_{\max}]^d$, paths $\a_k=\left(a_{k,1},\ldots,a_{k,d} \right)^{\top}\in\A\subseteq\{0,1\}^d$  for $1\leq k\leq K=|\A|$, and noise terms $\eta_t$ for $1\leq t\leq T,$ where $k$ is the index for paths and $t$ is the index for rounds. Here, $\A$ is the set of all possible paths in $G,$ and for a path $\a_k\in\A,$ $a_{k,j}=1$ if and only if it traverses link $j.$ With some abuse of notation, we use $k$ and $\a_k$ interchangeably to denote path $\a_k,$ and we refer to $\A$ as both a set and a matrix. Routing a packet through path $\a_k$ in round $t$ incurs the delay
$L_{t,k}=\langle \a_k,\bmu\rangle+\eta_t.$ Following the convention of existing bandits literature \cite{AYPS11}, we assume that $\eta_t$ is conditionally $R$-sub-Gaussian, where $R\geq0$ is a fixed and known constant. Formally, this means
\begin{align*}
	\forall\alpha\in\Re\quad\E\left[\exp\left(\alpha\eta_t\right)|a_{I_1},\ldots,a_{I_{t-1}},\eta_1,\ldots,\eta_{t-1}\right]\leq\exp\left(\frac{\alpha^2R^2}{2}\right)
\end{align*} 
and $$\E\left[\eta_t|a_{I_1},\ldots,a_{I_{t-1}},\eta_1,\ldots,\eta_{t-1}\right]=0.$$ In each round $t,$ a \dm~ follows a routing policy $P$ to choose the path $\It$ to route the packet based on its past
selections and previously observed feedback. Here, we consider \emph{end-to-end} (bandit) feedback setting in which only the delay of the selected path is observable as a whole rather than the \emph{individual} (semi-bandit) feedback in which the delays of all the traversed links are revealed. We measure the performance of $P$ via expected regret against the optimal policy with full knowledge of $\bmu$
\begin{align}
	\nonumber\E\left[\text{Regret}_T(P)\right]=\E\left[\sum_{t=1}^{T}L_{t,\It}-\min_{k\in[K]}\sum_{t=1}^TL_{t,k}\right]=\sum_{t=1}^T\langle \a_{\It},\bm\mu\rangle-T\langle \a_*,\bm\mu\rangle,
\end{align}
where $\a_*=\argmin_{\a_k\in\A}\langle \a_k,\bm{\mu}\rangle$ is the optimal path. In this paper, we require that $\a_*$ is unique. For any path $\a_k\neq\a_*,$ we define $\Delta_k=\langle\a_k-\a_*,\bmu\rangle$ as the difference of expected delay, \ie, the gap, between $\a_k$ and $\a_*.$ The maximum and minimum of $\Delta_{k}$ over all $k\in[K]$ with $\a_k\neq\a_*$ are denoted as $\Delta_{\max}$ and $\Delta_{\min},$ and are referred to as the maximum and  minimum gap, respectively.Without loss of generality, we assume $\mu_{\max}=1$\footnote{We shall relax this in the numerical experiments in Section \ref{sec:numerical}.} so that each path's expected delay is within $[0,d]$ and hence,
\begin{align}
\label{eq:delta_max}
\Delta_{\max}\leq d.
\end{align}

As it is common in stochastic bandit learning settings \cite{ABF02,AYPS11}, we distinguish between two different regret measures, namely the \emph{instance-dependent} regret and the \emph{worst case} regret
\begin{itemize}
	\item \textbf{Instance-dependent regret:} A regret upper bound is called instance-dependent if it is comprised of quantities that only depend on $T,d,\Delta_{k}$'s, and absolute constants.
	\item \textbf{Worst case regret:} A regret upper bound is called worst case if it is comprised of quantities that only depend on to $T,d,$ and absolute constants.
\end{itemize}
It is commonly known that when $\Delta_k$'s are allowed in the regret expressions, the regret can fall into the $\log T/\Delta_{\min}$ regime \cite{ABF02}. But depending on the choice of $\bmu,$ $\Delta_{{\min}}$ can become extremely small for any given $T,d,$ and $R,$ and the instance-dependent regret guarantee becomes meaningless. We therefore have to turn to the worst-case regret bound. We note that the regret is given by the minimum of the instance-dependent regret and worst case regret. Hence, it is desirable to obtain computationally efficient algorithms that have good instance-dependent and worst case regrets at the same time. Denoting $l_0$ as the maximal length of all the paths, \ie, $l_0=\max_{\a\in\A}\|\a\|_1,$ the instance-dependent regret lower bound is unclear yet, but from the combinatorial semi-bandits setting \cite{KWAS15} where individual feedback is available additionally, we know that it is of order at least $\Omega(dl_0\ln T/\Delta_{{\min}})$ \cite{S16}; The tight worst case regret lower bound is $\Omega\left(\sqrt{l_0^{3}dT}\right)$ \cite{CHK17}. 
\subsection{Design Challenges and Solution Strategies}
Since the mean link delay vector $\bmu$ is unknown, and we only get to know the end-to-end delay of the chosen path in each round, the \dm~falls into the so called exploration-exploitation dilemma. On one hand, the \dm~needs to explore the network to acquire accurate estimate of the expected delay of each path; while on the other, it needs to exploit the path with least delay to ensure low regret. As our problem resembles the stochastic multi-armed bandits problem, there are at least two natural approaches to address it:
\begin{itemize}
	\item \textbf{Optimism-in-the-Face-of-Uncertainty (OFU):} Following this principle, the \dm~balances exploration and exploitation by optimistically choosing the action with lowest confidence bound, \ie, the empirical mean loss with the confidence interval subtracted. In \cite{DHK08,AYPS11}, this approach has been shown to work in the general linear stochastic bandits setting, yet as pointed out in Section \ref{intro}, a direct adoption of the OFU principle to our problem cannot work. First, it fails to capture the underlying network structure, and brings a sub-optimal $O\left(\ln T\left(\ln T+d\ln\ln T\right)^2/{\Delta_{\min}}\right)$ instance-dependent and $O(d\log T\sqrt{T})$ worst case regret bounds \cite{AYPS11}. Even worse, the practicality of the algorithm is hindered by the high computational complexity in choosing the path to route. Indeed, it has been shown in \cite{DHK08} that the algorithm for path selection is polynomial time equivalent to a NP-hard negative definite linearly constrained quadratic programming.
	\item \textbf{Explore-then-Exploit:} Instead of doing exploration and exploitation simultaneously, the \dm~can collect data to construct accurate estimates for all actions' losses by first performing uniform exploration over all possible actions, and eliminates an action whenever it is confident that this action is sub-optimal. This procedure runs until there is only one action left. It has been shown in \cite{AO10} that the adaptive exploration approach works well for the ordinary stochastic multi-armed bandits setting. A similar approach has been applied to our problem of interest by the authors of \cite{LZ12}, and they achieve a sub-optimal $O\left(d_0^3d\ln T/{\Delta_{\min}}\right)$ instance-dependent regret with an inefficient algorithm. Here $d_0$ is the rank of the path matrix $\A.$
\end{itemize}  
As it is unclear how to get the OFU approach to work efficiently in our setting, we adopt the explore-then-exploit approach here. An immediate difficulty in implementing this approach is that the \dm~cannot afford to uniformly explore exponentially many paths. It's thus of great importance to devise a way to efficiently collect data in the stochastic online shortest path routing setting.
\section{Exploration Basis}
\label{sec:exploration_basis}
In order to execute the uniform exploration efficiently, the \dm~relies on a basis for the network. Intuitively, a set $\B\subseteq\A$ is a basis for $\A$ if it ``spans'' the set $\A,$ \ie, each path in $\A$ can be expressed as a linear combination of the paths in $\B.$ If the \dm~is able to accurately estimates the delays of the basis paths, it can also construct accurate delay estimators for all the paths in $\A$ thanks to the linearity property. It is worth noting that the concept of exploration basis has been raised in adversarial linear bandits before \cite{AK04}, and we review it here as it is going to be useful for our problem.
\subsection{Barycentric Spanners and Network Identifiability}
\label{ib}
Note that we have several requirements for $\B.$ First of all, the paths of $\B$ should come from $\A,$ \ie, $\B\subseteq\A,$ so that the \dm~can select them. Next, the set $\B$ should span the original path set $\A,$ \ie, $\text{rank}(\B)=d.$ Finally, denote the paths in $\B$ as $\b_1,\ldots,\b_d,$ and suppose any path $\a\in\A$ can be expressed as a linear combination of paths of $\b_1,\ldots,\b_d,$  \ie, there exits $\bnu_{\a}\in\Re^d,$ such that 
\begin{align}
\a=\B\bnu_{\a}=\sum_{i=1}^d\nu_{\a,i}\b_i.
\end{align}
We require that the absolute value of any $\nu_{\a,i}$ is bounded by some (small) positive constant $S,$ \ie,
\begin{align}
\label{eq:ub_bs1}
\forall\a\in\A\forall i\in[d]\quad\nu_{\a,i}\leq S.
\end{align}
To see the rationale behind the last requirement, we decompose the estimation error on $\a$'s delay as follows:
\begin{align}
\label{eq:error_decomp}
\left|\langle \a,\hbmu-\bmu\rangle\right|=\left|\left\langle \sum_{i=1}^d\nu_{\a,i}\b_i,\hbmu-\bmu\right\rangle\right|=\left|\sum_{i=1}^d\nu_{\a,i}\left\langle \b_i,\hbmu-\bmu\right\rangle\right|.
\end{align}
Here $\hbmu$ is any estimate of $\bmu.$ From eq. (\ref{eq:error_decomp}), we can see that all the $\nu_{\a,i}$'s should have small absolute values as otherwise, even small estimation error can be scale up drastically by any $\nu_{\a,i}$ with large absolute values. To this end, we introduce the concept of \emph{barycentric spanner} introduced by the authors of \cite{AK04}:
\begin{definition}[Barycentric spanner \cite{AK04}]
	\textit{Let $\mathcal{W}$ be a vector space over the real numbers, and $\mathcal{W}_0\subseteq\mathcal{W}$ a subset whose linear span is a $d$-dimensional subspace of $\mathcal{W}.$ A set $X=\{\x_1,\ldots,\x_d\}\subseteq \mathcal{W}_0$ is a barycentric spanner for $\mathcal{W}_0$ if every $\x\in \mathcal{W}_0$ may be expressed as a linear combination of elements of $X$ using coefficients in $[-1, 1].$ $X$ is the $S$-approximate barycentric spanner if every $\x\in \mathcal{W}_0$ may be expressed as a linear combination of elements of $X$ using coefficients in $[-S,S].$}
\end{definition}
The authors of \cite{AK04} also presented a result regarding the existence and search of barycentric spanner.
\begin{proposition}[\cite{AK04}]
	\label{bc_prop}
	Suppose $\mathcal{W}_0\subseteq \Re^d$ is a compact set not contained in any proper linear subspace. Given an oracle for optimizing linear functions over $\mathcal{W}_0,$ for any $S>1$ we may compute a $S$-approximate barycentric spanner for $\mathcal{W}_0$ in polynomial time, using $O\left(d^2\log_S(d)\right)$ calls to the optimization oracle.
\end{proposition}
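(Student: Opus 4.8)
The plan is to reduce the construction of an $S$-approximate barycentric spanner to repeatedly maximizing a \emph{determinant} over $\mathcal{W}_0$, and to observe that each such maximization is secretly a linear optimization the oracle can perform. Write a candidate basis as the columns of a matrix $X=[\x_1,\ldots,\x_d]$ with each $\x_i\in\mathcal{W}_0$. The first fact I would establish is a Cramer's-rule characterization: if $\{\x_1,\ldots,\x_d\}$ is linearly independent and replacing any single $\x_i$ by any $\x\in\mathcal{W}_0$ cannot increase $|\det X|$ by more than a factor of $S$, then it is an $S$-approximate barycentric spanner. Indeed, for arbitrary $\x\in\mathcal{W}_0$ expand $\x=\sum_i\nu_i\x_i$; by Cramer's rule $\nu_i=\det(\x_1,\ldots,\x_{i-1},\x,\x_{i+1},\ldots,\x_d)/\det X$, and the near-maximality of $|\det X|$ forces $|\nu_i|\leq S$, which is exactly the coefficient bound required by the definition.

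Second, I would note that, with the other columns held fixed, $\det(\x_1,\ldots,\x_{i-1},\x,\x_{i+1},\ldots,\x_d)$ is a \emph{linear} function of the free column $\x$, its coefficients being the cofactors of column $i$. Hence maximizing $|\det|$ over $\x\in\mathcal{W}_0$ amounts to two calls of the linear optimization oracle, one along the cofactor direction and one along its negation. This is what turns the abstract determinant-maximization into a computation the \dm~can actually carry out.

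The algorithm is then: (i) build an initial linearly independent set $\{\x_1,\ldots,\x_d\}\subseteq\mathcal{W}_0$ by a greedy forward pass, where at step $i$ one optimizes the oracle along a direction $\c$ orthogonal to $\mathrm{span}(\x_1,\ldots,\x_{i-1})$; since $\mathcal{W}_0$ lies in no proper subspace, in particular $\mathcal{W}_0\not\subseteq\c^{\perp}$, so the oracle returns a point outside the current span; (ii) repeatedly sweep $i=1,\ldots,d$, and whenever the best single-column replacement improves $|\det X|$ by a factor exceeding $S$, perform the swap; (iii) halt once a full sweep produces no swap, at which point the characterization above certifies an $S$-approximate spanner.

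For the oracle-call bound I would argue as follows. Each sweep costs $O(d)$ calls. A sweep that makes at least one swap multiplies $|\det X|$ by at least $S$, while $|\det X|$ is bounded above by some finite $D_{\max}$ (by compactness and Hadamard's inequality) and below by the starting value $D_0$; hence the number of productive sweeps is at most $\log_S(D_{\max}/D_0)$ and the total cost is $O(d\log_S(D_{\max}/D_0))$. The main obstacle, and the only genuinely quantitative step, is bounding the initial basis quality as $D_{\max}/D_0\leq d^{O(d)}$, so that $\log_S(D_{\max}/D_0)=O(d\log_S d)$ and the total becomes $O(d^2\log_S d)$. I would prove this factor by comparing the greedy projection distances $\mathrm{dist}(\x_i,\mathrm{span}(\x_1,\ldots,\x_{i-1}))$ against those of an optimal-determinant basis through Hadamard's inequality, the point being that greedy maximality sacrifices at most a $\sqrt{d}$ factor per coordinate. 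Everything else (the Cramer step, the column-linearity observation, and termination) is routine once this determinant-ratio bound is secured.
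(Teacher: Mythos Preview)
The paper does not prove Proposition~\ref{bc_prop}; it is quoted verbatim from \cite{AK04}, and only the algorithm (Algorithm~\ref{alg:barycentric} in Appendix~\ref{sec:barycentric_app}) is restated for completeness. So there is no in-paper proof to compare against. Your sketch is essentially the original Awerbuch--Kleinberg argument: the Cramer's-rule certificate, the observation that the determinant is linear in a single column so each column-swap reduces to two oracle calls, and the potential argument that every improving swap multiplies $|\det X|$ by at least $S$ while $|\det X|\le D_{\max}$.

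One point worth tightening: your initialization ``optimize along a direction $\c$ orthogonal to $\mathrm{span}(\x_1,\ldots,\x_{i-1})$'' is under-specified. The AK04 initialization (and Algorithm~\ref{alg:barycentric} here) uses the \emph{particular} orthogonal direction given by the cofactor vector of column $i$ in $(\x_1,\ldots,\x_{i-1},\cdot,\e_{i+1},\ldots,\e_d)$; an arbitrary orthogonal direction $\c$ would still produce a linearly independent set, but would not in general yield the determinant lower bound $D_{\max}/D_0\le d^{O(d)}$ you need for the $O(d^2\log_S d)$ oracle count. Your proposed Hadamard/projection-distance comparison is the right flavor, but it relies on each greedy step approximately maximizing the height $\mathrm{dist}(\x_i,\mathrm{span}(\x_1,\ldots,\x_{i-1}))$, which a generic orthogonal functional does not do. With the specific AK04 initialization this bound does go through, and the rest of your outline is correct.
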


The authors of \cite{AK04} also present an algorithm for finding a $S$-approximate barycentric spanner for any $S>1$. For completeness of presentation, we include this in Appendix \ref{sec:barycentric_app}. The assumption stated in Proposition \ref{bc_prop} that the set $\mathcal{W}_0$ is not contained in any proper subspace is closely related to network identifiability. Informally, we say that a network $G$ with $d$ links is \emph{identifiable} if $\A,$ its set of paths, spans the space $\Re^d$. In Theorem 3.1 of \cite{MHLST13}, the authors showed that it is in general impossible for $G$ to be identifiable if all the paths in $\A$ originate from and end at the same pair of nodes, but Theorem 3.2 of \cite{MHLST13} also states that it is possible for a subgraph of  $G$ to be identifiable. For ease of our discussion, we call each of the links that is incident to either the source or the destination as an \emph{external link}, and all other links the \emph{internal links}. A network $G_0\subseteq G$ with both the source and destination nodes as well as all the external links of $G$ removed is called the \emph{internal network.} In Fig.~\ref{fig:overlay_eg}, links $(1,2),(1,4),(3,6),$ and $(5,6)$ are external links; while the rest are internal links. We can see that the internal network with node $2,3,4,5$ is identifiable as the paths $(2,3),(2,3,5),(2,5),(2,4,5),$ and $(4,5)$ span the space $\Re^5.$ To this end, we temporarily make the following additional assumption (to be relaxed in Section \ref{sec:general_networks})
\begin{assumption}
	\label{identifiability_assumption}
	The internal network of $G$ is identifiable, and the expected delays of all the external links are known a priori. To avoid clutters, we further assume that the expected delays of the external links are deterministically 0.
\end{assumption}
With some abuse of notation, $d$ refers to the number of internal links whenever Assumption \ref{identifiability_assumption} is imposed, and it is equal to $d_0.$  Given Proposition \ref{bc_prop} and Assumption \ref{identifiability_assumption}, the \dm~can pick a positive number $S~(>1)$ first, and then implement Algorithm \ref{alg:barycentric} in Appendix \ref{sec:barycentric_app} to identify in polynomial time the $S$-approximate barycentric spanner $\B,$ \ie, for any path $\a\in\A,$ there exists some $\bnu_{\a}\in[-S,S]^d,$ such that $\B\bnu_{\a}=\a.$ By the definition of $S$-approximate barycentric spanner, the maximal $\ell_2$ norm of $\bnu_{\a}$ over all $\a\in\A$ is upper bounded by $S\sqrt{d},$ \ie,
\begin{align}
\label{eq:ub_bs}
\max_{\a\in\A}\|\bnu_a\|\leq\sqrt{\sum_{i=1}^dS^2}\leq S\sqrt{d}.
\end{align}
\section{Explore-then-Commit Algorithm: A Warm-Up}
\label{sec:etc}
In this section, we develop the \emph{Explore-then-Commit} (EC) algorithm based on non-adaptive exploration to solve the problem.
\subsection{Design Intuitions}
The design of the \etc~follows an intuitive rationale: if the \dm~is able to recover the expected delay of each path of the $\B$ accurately, it will also be able to accurately estimate the expected delay of each path as the delay of each path is the linear combination of the elements in the barycentric spanner. Once the \dm~believes that the optimal path has been found with high probability, it could choose to commit to this path, and incurs low regret. To begin, we assume that the \dm~knows the minimum gap $\Delta_{\min}.$ We will later relax this assumption to obtain practical algorithms. 
\subsection{Design Details}
Given a positive integer $n~(\leq \lfloor T/d\rfloor),$ we aim at getting a good estimate of $\bmu$ in the first $n\cdot d$ rounds, and then chooses the estimated best path in each of the remaining $T-n\cdot d$ rounds. We thus call the first $n\cdot d$ rounds as the exploration stage, and the remaining $T-n\cdot d$ rounds as the committing stage. The \etc~divides the exploration stage into epochs of length $d,$ and chooses each path in $\B$ once in every epoch until the end of the exploration stage. Afterwards, the \etc~makes use of the \emph{Ordinary Least Square}~(OLS) estimator to construct an estimate for $\bmu.$ Specifically, the paths used in the first $n$ epochs (or $n\cdot d$ rounds) form the design matrix 
$$\D_{n}=\begin{pmatrix}
		\a_{I_1},\ldots,\a_{I_{nd}}
	\end{pmatrix}^{\top}$$
and the observed losses form the response vector
$$\r_{n}=\begin{pmatrix}L_{1,I_1},\ldots,L_{nd,I_{nd}}\end{pmatrix}^{\top}.$$
The OLS estimator then gives us \begin{align}\label{eq:ols}\hbmu_n=\left(D_n^{\top}D_n\right)^{-1}D_n^{\top}\r_n.\end{align}
 Thanks to the identifiability assumption, $D_n^{\top}D_n$ is full rank, and $\hbmu_n$ is well-defined. One can easily verify 
$\E\left[\hbmu_n\right]=\bmu.$
Finally, the \etc~applies an arbitrary shortest path algorithm to compute the path with the lowest estimated delay, and commits to this path in the exploitation stage.
\subsection{Regret Analysis}
To properly tune the parameter $n,$ an essential tool is a deviation inequality on the estimation errors. 
\begin{theorem}
	\label{ad}
	After $m$ epochs of explorations, the probability that there exists a path $\a\in\A,$ such that the estimated mean delay of $\a$ deviates from its mean delay by at least $SR\sqrt{{2\ln (2)d^2+4d\ln{\delta^{-1}}}/{m}}$ is at most $\delta,$ \ie,
	\begin{align*}
	\Pr\left(\exists\a\in\A:\left|\langle \a,\bmu\rangle-\langle \a,\hbmu_m\rangle\right|\geq SR\sqrt{\frac{2\ln (2)d^2+4d\ln{\delta^{-1}}}{m}}\right)\leq\delta.
	\end{align*}
\end{theorem}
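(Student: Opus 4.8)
The plan is to reduce the uniform-over-$\A$ deviation bound to a single concentration inequality for the norm of a $d$-dimensional noise vector, thereby sidestepping a union bound over the (possibly exponentially many) $K$ paths. First I would unpack the OLS estimator. Writing $\r_m = D_m\bmu + \bm{\eta}$ with $\bm{\eta}=(\eta_1,\dots,\eta_{md})^{\top}$ gives $\hbmu_m-\bmu = (D_m^{\top}D_m)^{-1}D_m^{\top}\bm{\eta}$. Since every basis path $\b_i$ is played exactly $m$ times, I would collect the rounds in which $\b_i$ is chosen into a set $T_i$, so that $D_m^{\top}D_m = mB^{\top}B$ and $D_m^{\top}\bm{\eta}=B^{\top}\bm{s}$, where $B$ is the $d\times d$ matrix with rows $\b_1^{\top},\dots,\b_d^{\top}$ and $s_i=\sum_{t\in T_i}\eta_t$. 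As $B$ is invertible under Assumption~\ref{identifiability_assumption}, this collapses to $\hbmu_m-\bmu=\tfrac1m B^{-1}\bm{s}$, whence the error along a basis direction is exactly the averaged noise, $\langle\b_i,\hbmu_m-\bmu\rangle=\bar\eta_i:=\tfrac1m\sum_{t\in T_i}\eta_t$. For an arbitrary $\a=\B\bnu_{\a}$, the decomposition~(\ref{eq:error_decomp}) together with Cauchy--Schwarz and the spanner bound~(\ref{eq:ub_bs}) then yields
$$\left|\langle\a,\hbmu_m-\bmu\rangle\right|=\left|\sum_{i=1}^d\nu_{\a,i}\bar\eta_i\right|\le\|\bnu_{\a}\|\,\|\bar{\bm{\eta}}\|\le S\sqrt{d}\,\|\bar{\bm{\eta}}\|,$$
where $\bar{\bm{\eta}}=(\bar\eta_1,\dots,\bar\eta_d)^{\top}$. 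The right-hand side no longer depends on $\a$, so it suffices to control the single scalar $\|\bar{\bm{\eta}}\|$.

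Second, I would establish concentration of $\|\bar{\bm{\eta}}\|^2=\sum_i\bar\eta_i^2$. Each $\bar\eta_i$ averages $m$ conditionally $R$-sub-Gaussian terms over the disjoint index set $T_i$ and is therefore $R/\sqrt{m}$-sub-Gaussian. To bound the quadratic form I would run a Chernoff argument, linearizing it with the Gaussian-integral identity $\exp(s\|\bar{\bm{\eta}}\|^2)=\E_{\bm{g}}\exp(\sqrt{2s}\langle\bm{g},\bar{\bm{\eta}}\rangle)$ for $\bm{g}\sim N(0,I_d)$. After exchanging expectations and writing $\langle\bm{g},\bar{\bm{\eta}}\rangle=\sum_t c_t\eta_t$ with $c_t=g_{i(t)}/m$, I would peel off the noise terms one round at a time via the tower rule over the filtration, using the conditional sub-Gaussian hypothesis; since $\sum_t c_t^2=\|\bm{g}\|^2/m$, this gives $\E_{\bm{\eta}}\exp(\sqrt{2s}\langle\bm{g},\bar{\bm{\eta}}\rangle)\le\exp(sR^2\|\bm{g}\|^2/m)$. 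Taking the remaining Gaussian expectation produces the sub-exponential MGF bound $\E\exp(s\|\bar{\bm{\eta}}\|^2)\le(1-2sR^2/m)^{-d/2}$, valid for $s<m/(2R^2)$.

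Finally, the Chernoff tail $\Pr(\|\bar{\bm{\eta}}\|^2\ge a)\le e^{-sa}(1-2sR^2/m)^{-d/2}$ with the choice $s=m/(4R^2)$ becomes $2^{d/2}\exp(-ma/(4R^2))$; setting this to $\delta$ gives the threshold $a=R^2(2\ln(2)d+4\ln\delta^{-1})/m$, engineered precisely so that $S\sqrt{d}\sqrt{a}=SR\sqrt{(2\ln(2)d^2+4d\ln\delta^{-1})/m}$. Combining with the first step, on the event $\{\|\bar{\bm{\eta}}\|^2<a\}$, which has probability at least $1-\delta$, every path simultaneously satisfies $|\langle\a,\hbmu_m-\bmu\rangle|<SR\sqrt{(2\ln(2)d^2+4d\ln\delta^{-1})/m}$, which is the claim.

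The conceptual crux, and the step I expect to be the main obstacle, is the very first reduction: a naive union bound over $\A$ would cost a factor $\ln K$ that can scale linearly in $d$ and ruin the rate, so replacing it by the deterministic inequality $|\langle\a,\hbmu_m-\bmu\rangle|\le S\sqrt{d}\|\bar{\bm{\eta}}\|$ through the barycentric spanner is what makes the clean bound attainable. The remaining technical care lies in (i) justifying the MGF product bound under merely conditional (martingale) sub-Gaussianity rather than outright independence, and (ii) verifying that the normalization $sR^2/m=\tfrac14$ is exactly what reproduces the stated constants $2\ln 2$ and $4$.
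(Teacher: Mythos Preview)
Your proposal is correct and follows essentially the same route as the paper: the paper packages the first step as a lemma bounding $\|\hbmu_m-\bmu\|_{\V_m}=\|\z\|_{\V_m^{-1}}$ (which, since $\V_m=m\B\B^{\top}$, is exactly $\sqrt{m}\,\|\bar{\bm{\eta}}\|$ in your notation) via the same Gaussian-mixture/Laplace MGF argument and the same choice of parameter yielding the $2^{d/2}$ constant, and then combines it with the barycentric-spanner Cauchy--Schwarz reduction just as you do. The only cosmetic difference is that the paper keeps the design-matrix formulation abstract while you specialize immediately to the basis-averaged noises $\bar\eta_i$.
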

\begin{proof}
	The proof of Theorem \ref{ad} makes use of the convergence property of the OLS estimator and the fact that the $\B$ is the $S$-approximate barycentric spanner with $S=2$. Please refer to Section \ref{sec:ad} for the complete proof.
\end{proof}
 We are now ready to present the regret bound of \etc.
\begin{theorem}
	\label{etcthm}
	With the knowledge of $\Delta_{\min},$ \etc~has the following regret bounds:
	\begin{itemize}
		\item \textit{Instance-dependent regret:}
		$O\left(\frac{\left(d^2\ln (dT)+d^3\right)\Delta_{\max}}{\Delta_{\min}^2}\right)$
		\item \textit{Worst case regret:} $\widetilde{O}\left(d^{\frac{4}{3}}T^{\frac{2}{3}}\right).$
	\end{itemize}
\end{theorem}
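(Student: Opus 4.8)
The plan is to split the expected regret across the $\etc$'s two stages: an exploration stage of $nd$ rounds and a committing stage of $T-nd$ rounds. Since each suboptimal path costs at most $\Delta_{\max}$ per round, the exploration stage contributes at most $nd\,\Delta_{\max}$ no matter how $n$ is tuned, so the real work is in controlling the committing stage. I would anchor everything on the good event $\mathcal{E}=\{\forall\,\a\in\A:\ |\langle\a,\bmu\rangle-\langle\a,\hbmu_n\rangle|<\epsilon\}$ with $\epsilon=SR\sqrt{(2\ln(2)d^2+4d\ln\delta^{-1})/n}$ the deviation radius of Theorem~\ref{ad} (taking $S=2$), which gives $\Pr(\mathcal{E}^c)\le\delta$ uniformly over the exponentially many paths. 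Everything then reduces to choosing $n$ and $\delta$ and translating $\mathcal{E}$ into a statement about which path the \dm~commits to.

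For the instance-dependent bound I would set $\delta=1/(dT)$ and take $n$ to be the smallest integer making $\epsilon\le\Delta_{\min}/2$, which gives $n=\Theta\!\big((d^2+d\ln(dT))/\Delta_{\min}^2\big)$. On $\mathcal{E}$ every path's estimated delay lies within $\Delta_{\min}/2$ of its truth, so no suboptimal path (whose gap is at least $\Delta_{\min}$) can look cheaper than $\a_*$; the \dm~commits to $\a_*$ and pays nothing in the committing stage. The committing stage then only contributes through $\mathcal{E}^c$, at most $\delta\,T\,\Delta_{\max}=O(\Delta_{\max}/d)$, a lower-order term. Adding the exploration cost $nd\,\Delta_{\max}=O\!\big((d^3+d^2\ln(dT))\Delta_{\max}/\Delta_{\min}^2\big)$ delivers the claimed instance-dependent regret.

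For the worst-case bound I would instead choose $n$ free of $\Delta_{\min}$ and balance the two stages. The crucial observation is that even when $\mathcal{E}$ does not pin down $\a_*$, on $\mathcal{E}$ the committed path $\a_k$ obeys $\langle\a_k,\hbmu_n\rangle\le\langle\a_*,\hbmu_n\rangle$, and chaining this with the two-sided error bound yields $\Delta_k=\langle\a_k-\a_*,\bmu\rangle<2\epsilon$. Hence the committing regret is at most $2T\epsilon$ on $\mathcal{E}$ and at most $\delta T d$ on $\mathcal{E}^c$ (using $\Delta_{\max}\le d$). Bounding exploration by $nd\,\Delta_{\max}\le nd^2$ and writing $\epsilon=\widetilde{O}(d/\sqrt{n})$, the total is $\widetilde{O}(nd^2+Td/\sqrt{n})$ once the poly-small $\delta T d$ term is absorbed; optimizing over $n$ gives $n=\Theta((T/d)^{2/3})$, at which both terms equal $\widetilde{O}(d^{4/3}T^{2/3})$.

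To get both guarantees from one run I would take $n$ to be the minimum of the two tunings above (capped at $\lfloor T/d\rfloor$), so that whichever regime is active the corresponding bound applies and the regret is simultaneously at most both expressions. The main obstacle is not any individual estimate---Theorem~\ref{ad} already furnishes the uniform, spanner-based control that tames the exponential action set---but the bookkeeping around the committing stage: making the gap-versus-error inequality $\Delta_k<2\epsilon$ fully rigorous (this is precisely what frees the worst-case bound of any $\Delta_{\min}$ dependence) and checking that the $\mathcal{E}^c$ contributions remain lower order under both tunings so the two stated bounds emerge cleanly.
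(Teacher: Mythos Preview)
Your proposal is correct and follows essentially the same approach as the paper: both split the regret into exploration and commitment stages, invoke Theorem~\ref{ad} to control the uniform deviation event, then (i) for the instance-dependent bound choose $n$ so that $\epsilon\le\Delta_{\min}/2$ forces the committed path to be $\a_*$ on the good event, and (ii) for the worst-case bound use the chain $\langle\a_k,\hbmu_n\rangle\le\langle\a_*,\hbmu_n\rangle\Rightarrow\Delta_k\le 2\epsilon$ and balance $nd^2$ against $Td/\sqrt{n}$ to obtain $n=\Theta\big((T/d)^{2/3}\big)$. The only cosmetic difference is that the paper parameterizes by first expressing $\delta$ as a function of $n$ and $\Delta_{\min}$ and then solving for $n$, whereas you fix $\delta=1/(dT)$ up front; the resulting orders coincide.
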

\begin{proof}
	Please refer to Section \ref{sec:etcthm} for the complete proof.
\end{proof}
\begin{remark} 
	The instance-dependent regret bound obtained in Theorem~\ref{etcthm} is a significant improvement compared to the direct application of OFU approach, and the worst case regret can be achieved without knowing $\Delta_{\min}.$ Nevertheless, we should be aware that the choice of $n$ for the instance-dependent regret bound relies on knowing $\Delta_{\min},$ which is never the case in practice. 
\end{remark}
Though being computationally efficient, the above remark indicates that the non-adaptive \etc~is not sufficient to achieve optimal regret bounds.
\section{Top-Two Comparison Algorithm: An Adaptive Exploration Approach}
\label{ttc}
As we have seen from the previous discussions, the non-adaptive \etc~fails to make full use of the observed delays to explore adaptively, and its success relies almost solely on knowing $\Delta_{\min}$ ahead of time. 

In this section, we study adaptive exploration algorithms that have been shown to achieve nearly optimal regret bounds in stochastic MAB \cite{AO10,S17} to obtain nearly optimal instance-dependent and worst case regret bounds. Different from those in ordinary stochastic MAB settings, the algorithm builds on top of a novel \emph{top two comparison} (TTC) method to allow efficient computation. We start by attaining a nearly optimal instance-dependent regret bound, and then show how to attain a nearly optimal worst case regret bound simultaneously.
\subsection{Design Intuitions}
\label{ttc:intuition}
Adaptive exploration algorithms often serve as an alternative for UCB algorithms in stochastic multi-armed bandits  \cite{AO10,S17}. In \cite{AO10,S17}, the \dm~uniformly explores all remaining actions, and periodically executes an action elimination rule to ensure with high probability that:
\begin{itemize}
	\item The optimal action remains;
	\item The sub-optimal actions can be removed effectively.
\end{itemize}
until only one action is left, and commits to that action in the rest of the rounds. The adaptive exploration algorithms achieve optimal $O(K\log T)$ instance-dependent and $O(\sqrt{KT\log T})$ worst case regret bounds for stochastic multi-armed bandits.

We start by demonstrating how an adaptive exploration algorithm can achieve the nearly optimal $O\left(\left(d\log T+d^2\right)\Delta_{\max}/\Delta_{\min}\right)$ instance-dependent regret bound. Similar to the \etc, the adaptive exploration algorithm also splits the $T$ rounds into an exploration stage and a committing stage: in each epoch $m=1,2,\ldots$ of the exploration stage, the \dm~selects every path in $\B$ once so that all of them have $m$ samples. To ease our presentation, we denote the estimated shortest path after $m$ epochs of uniform exploration as $\tilde{a}_m,$ \ie,
\begin{align*}
\tilde{\a}_m\leftarrow\argmin_{\a\in\A}\langle\a,\hbmu_{m}\rangle,
\end{align*} 
and follow Theorem \ref{ad} to denote the $1-\delta$ confidence bound as $\tilde{\Delta}_m,$ \ie,
\begin{align}\label{eq:conf_interval}\tilde{\Delta}_m=SR\sqrt{\frac{2\ln (2)d^2+4d\ln{\delta^{-1}}}{m}}.\end{align}
We denote the total length of exploration stage by a random variable $N.$ We then use a simple union bound to show the probability that there exists a path $\a\in\A,$ such that the estimated mean delay of $\a$ deviates from its mean delay by at least $\tilde{\Delta}_m$ at the end of any epoch in the committing stage can be upper bounded as 
\begin{align}
\nonumber&\Pr\left(\exists m\in\left[N\right],\a\in\A:\left|\langle \a,\bmu\rangle-\langle \a,\hbmu_m\rangle\right|\geq \tilde{\Delta}_m\right)\\
\nonumber\leq&\sum_{m=1}^{N}\Pr\left(\exists \a\in\A:\left|\langle \a,\bmu\rangle-\langle \a,\hbmu_m\rangle\right|\geq \tilde{\Delta}_m\right)\\
\label{union_bound}\leq&\sum_{m=1}^{T}\delta\\
\nonumber\leq&\frac{T\delta}{d},
\end{align}
where we have used Theorem \ref{ad} and the fact that $N\leq T$ in inequality (\ref{union_bound}). In other words, if we denote the event $E$ as following: any path $\a_k$'s estimated delay $\langle\a_k,\hbmu_{m}\rangle$ is within $\tilde{\Delta}_m$ distance from its true expected delay $\langle\a_k,\bmu\rangle$ for all $m\in[N],$ \ie,
\begin{align}
E=\{\forall m\in[N]\forall \a\in\A:\left|\langle \a,\bmu\rangle-\langle \a,\hbmu_m\rangle\right|\leq \tilde{\Delta}_m\}
\end{align}
then event $E$ holds with probability at least $(1-T\delta/d)$ in the adaptive exploration algorithm. From inequality (\ref{eq:delta_max}), we have $\Delta_{\max}\leq d,$ and the worst possible total regret (\ie, choosing the path with maximum gap in each round) an algorithm can incur is $T\Delta_{\max}\leq Td,$ we can tune $\delta$ properly, \ie, setting $\delta=T^{-2},$ so that the regret incurred by the algorithm in case $E$ does not hold is at most $1.$  Therefore, we only need to focus the case when $E$ holds.

Conditioned on $E,$ we assert that the \dm~could detect if any of the remaining paths $\a_k$ is sub-optimal by checking whether
\begin{align}
\label{checking}
\langle\a_k,\hbmu_{m}\rangle-\langle\tilde{\a}_m,\hbmu_m\rangle>2\tilde{\Delta}_m
\end{align} 
holds at the end of each epoch $m.$ Afterwards, the identified sub-optimal paths are eliminated. We use Figure \ref{fig:ttc_graph} to illustrate the rationale behind this criterion. Note that in both Fig. \ref{fig:ttc_graph1} and \ref{fig:ttc_graph2}, the horizontal right arrow is the positive number axis. 

In Fig. \ref{fig:ttc_graph1}, suppose $\langle\tilde{\a}_m,\hbmu_m\rangle$ and $\langle\a_k,\hbmu_{m}\rangle$ lie at $B$ and $F,$ respectively. Conditioned on event $E,$ $\langle\tilde{\a}_m,\bmu\rangle$ should locate within the interval $[A,C]$ while $\langle\a_k,\bmu\rangle$ should locate within the interval $[D,H].$ Now if $B$ and $F$ are more than $2\tilde{\Delta}_m$ away from each other, then \begin{align}\langle\tilde{\a}_m,\bmu\rangle<\langle\a_k,\bmu\rangle.\end{align} 
In other words, path $\a_k$ is sub-optimal as its expected delay is at least longer than $\tilde{\a}_m.$

Similarly in Fig. \ref{fig:ttc_graph2}, suppose $\langle\tilde{\a}_*,\bmu\rangle$ and $\langle\a_k,\bmu\rangle$ lie at $A'$ and $D',$ respectively. Conditioned on event $E,$ $\langle\tilde{\a}_m,\hbmu_m\rangle~(\leq\langle{\a}_*,\hbmu_m\rangle)$ should locate to the left of $B'$ while $\langle\a_k,\hbmu_m\rangle$ should locate to the right of $C'.$ Now if $\Delta_k>4\tilde{\Delta}_m,$ then
\begin{align}
\langle\a_k,\hbmu_{m}\rangle-\langle\tilde{\a}_m,\hbmu_m\rangle>2\tilde{\Delta}_m,
\end{align}
which means the sub-optimal path $\a_k$ is detected according to criterion (\ref{checking}).
\begin{figure}[!ht]
	\subfigure[The removed arm is sub-optimal]{\label{fig:ttc_graph1}\includegraphics[width=8cm,height=3.8cm]{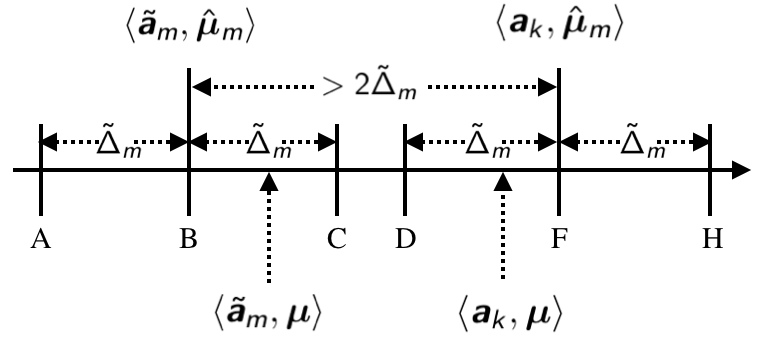}}
	\subfigure[All sub-optimal actions can be detected effectively]{\label{fig:ttc_graph2}\includegraphics[width=8cm,height=3.8cm]{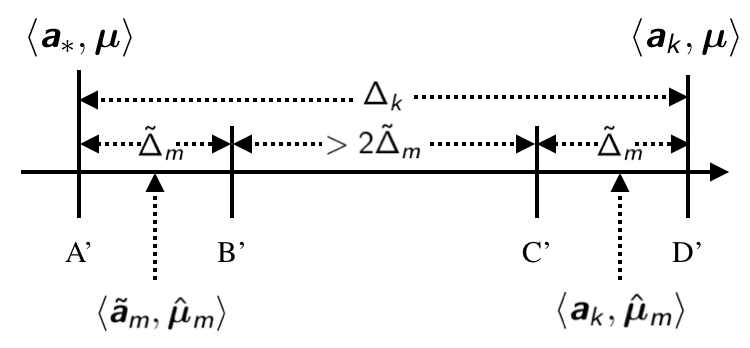}}
	\caption{Intuitions underpinning criterion (\ref{checking})}
	\label{fig:ttc_graph}
\end{figure}

We formalize these observations in the following lemma.
\begin{lemma}
	\label{lemma:ttc_intuition}
	Conditioned on event $E,$  if criterion (\ref{checking}) holds, then
	\begin{enumerate}
		\item path $\a_k$ is sub-optimal;
		\item any sub-optimal path $\a_k$ with $\Delta_k>4\tilde{\Delta}_m$ is detected.
	\end{enumerate}
\end{lemma}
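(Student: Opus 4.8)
The plan is to reduce both claims to short chains of inequalities that rely on exactly two facts, both available once we condition on event $E$. The first is the defining property of $E$: for every path, the estimated delay $\langle\a,\hbmu_m\rangle$ and the true delay $\langle\a,\bmu\rangle$ differ by at most $\tilde{\Delta}_m$, so each application of $E$ lets me trade an estimated quantity for a true one at a cost of $\tilde{\Delta}_m$. The second is the optimality of $\tilde{\a}_m$, namely that it minimizes $\langle\cdot,\hbmu_m\rangle$, which in particular gives $\langle\tilde{\a}_m,\hbmu_m\rangle\leq\langle\a_*,\hbmu_m\rangle$. The whole argument is just a matter of applying these two facts in the right directions, so I would be careful throughout to track whether each use of $E$ produces a lower bound (e.g. on $\langle\a_k,\bmu\rangle$) or an upper bound (e.g. on $\langle\tilde{\a}_m,\bmu\rangle$).

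For the first claim I would assume criterion (\ref{checking}) holds and pass from estimated to true delays. On $E$ we have $\langle\a_k,\bmu\rangle\geq\langle\a_k,\hbmu_m\rangle-\tilde{\Delta}_m$ and $\langle\tilde{\a}_m,\bmu\rangle\leq\langle\tilde{\a}_m,\hbmu_m\rangle+\tilde{\Delta}_m$. Subtracting and inserting the criterion gives $\langle\a_k,\bmu\rangle-\langle\tilde{\a}_m,\bmu\rangle>\bigl(\langle\a_k,\hbmu_m\rangle-\langle\tilde{\a}_m,\hbmu_m\rangle\bigr)-2\tilde{\Delta}_m>0$. Since $\langle\tilde{\a}_m,\bmu\rangle\geq\langle\a_*,\bmu\rangle$ unconditionally (because $\a_*$ is the true global minimizer over $\A$), this forces $\langle\a_k,\bmu\rangle>\langle\a_*,\bmu\rangle$, i.e. $\Delta_k>0$ and $\a_k$ is sub-optimal.

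For the second claim I would run the same bookkeeping in the opposite direction, now starting from the gap hypothesis $\Delta_k>4\tilde{\Delta}_m$. On $E$ I bound $\langle\a_k,\hbmu_m\rangle\geq\langle\a_k,\bmu\rangle-\tilde{\Delta}_m$, while the optimality of $\tilde{\a}_m$ together with $E$ yields $\langle\tilde{\a}_m,\hbmu_m\rangle\leq\langle\a_*,\hbmu_m\rangle\leq\langle\a_*,\bmu\rangle+\tilde{\Delta}_m$. Subtracting gives $\langle\a_k,\hbmu_m\rangle-\langle\tilde{\a}_m,\hbmu_m\rangle\geq\Delta_k-2\tilde{\Delta}_m>2\tilde{\Delta}_m$, which is precisely criterion (\ref{checking}); hence $\a_k$ is detected.

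Because the computation is so light, I expect the main obstacle to be conceptual bookkeeping rather than algebra: I must ensure that the comparison path invoked in each claim is genuinely legitimate. In the first claim the step $\langle\tilde{\a}_m,\bmu\rangle\geq\langle\a_*,\bmu\rangle$ is unconditional and poses no issue. In the second claim, however, the crucial inequality $\langle\tilde{\a}_m,\hbmu_m\rangle\leq\langle\a_*,\hbmu_m\rangle$ requires $\a_*$ to still be a feasible candidate when $\tilde{\a}_m$ is selected — which is exactly what the first claim secures, since on $E$ the elimination rule never removes the optimal path. I would therefore present the two claims in this logical order, or at least flag the dependence explicitly, so that the detection argument may legitimately compare against $\a_*$.
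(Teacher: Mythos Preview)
Your proof is correct and follows essentially the same chain of inequalities as the paper: both parts trade estimated for true delays via the event $E$ (costing $\tilde{\Delta}_m$ each time) and invoke the optimality of $\tilde{\a}_m$ exactly where you do. Your extra remark on the logical dependency---that the detection argument needs $\a_*$ to remain feasible, which part~1 guarantees---is a nice piece of bookkeeping the paper leaves implicit.
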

\begin{proof}
	The proof follows from the above arguments. Please refer to Section \ref{sec:lemma:ttc_intuition} for the complete proof.
\end{proof}
These two nice properties of criterion (\ref{checking}) jointly guarantee that the optimal path remains in $\A$, and any sub-optimal path $\a_k$ is removed once $\tilde{\Delta}_m$ shrinks down to below $\Delta_{\min}/4.$ Specifically, if $m$ arrives to a value $\overline{m}$ that $\tilde{\Delta}_{\overline{m}}\leq\Delta_{\min}/4$ or 
\begin{align}\overline{m}=\frac{S^2R^2(32\ln (2)d^2+64d\ln{\delta^{-1}})}{\Delta^2_{\min}}\end{align}
according to eq. (\ref{eq:conf_interval}), all sub-optimal paths should have been eliminated.

Roughly speaking, conditioned on $E,$ the regret of the adaptive algorithm is 
\begin{align}d\overline{m}\Delta_{\max}=O\left(\frac{\left(d^2\ln{\delta^{-1}}+d^3\right)\Delta_{\max}}{\Delta^2_{\min}}\right).\end{align}
Recalling that the regret conditioned on $\neg E$ is at most $Td,$ setting $\delta=T^{-2},$ the expected regret of this algorithm is upper bounded as
$O\left({\left(d^2\ln{T}+d^3\right)\Delta_{\max}}/{\Delta^2_{\min}}\right),$ and we shall formalize this analysis in Theorem \ref{ttcthm}. Surprisingly, adaptivity saves us from a lack of knowledge on the exact value of $\Delta_{\min}.$
\subsection{Efficient Implementation}
\label{ttc:implementation}
One may note that implementing the criterion (\ref{checking}) requires an enumeration over the set $\A,$ which is typically exponential in size (in terms of $d$). In this subsection, we further propose a polynomial time implementation, namely the \emph{Top Two Comparison} (TTC) algorithm, for our problem.

Different from the adaptive exploration algorithms proposed for stochastic multi-armed bandit problems \cite{AO10,S17}, which uniformly explores the set of remaining actions, our strategy decouples the exploration basis $\B$ from path elimination by making use of the $S$-approximate barycentric spanner $\B.$ In other words, the \dm~does not need to eliminate the sub-optimal paths one by one. As the optimal path is unique by assumption, it can instead remove all of them at the same time once the difference between the delay of the estimated shortest path and the delay of the estimated second shortest path is larger than $2\tilde{\Delta}_m$ for some epoch $m.$ 

To find the estimated second shortest path, we make the observation that the estimated second shortest path should traverse at least one link that is different than those in the estimated shortest path. The \dm~could start by iteratively setting the delay of links traversed by the shortest path to a large number, \ie, $100d,$ one at a time, while keeping the estimated delays of all other links intact, and find the delay of the shortest path with respect to the ``perturbed" estimated delay vector. Finally, the minimum delay over these ``perturbed" delays is the second shortest delay. 

\subsection{Design Details}
\label{ttc:detail}
We are now ready to formally present the \ttc. Following the design guidelines presented in Sections \ref{ttc:intuition} and \ref{ttc:implementation}, the \ttc~initializes the set of remaining paths as $\A_1=\A,$ and divides the time horizon into epochs. In the $m^{\text{th}}$ epoch, \ttc~distinguishes two cases:
\begin{enumerate}
	\item If $\A_{m}$ contains only one path, \ttc~chooses this path, and sets $\A_{m+1}=\A_m;$
	\item Otherwise, the \ttc~picks each path in $\B$ once so that every path in $\B$ has been selected $m$ times. It then computes the OLS estimate $\hbmu_{m}$ for $\bmu,$ and identifies the path $\tilde{\a}_m$ with least estimated delay, \ie, $\tilde{\a}_m=\argmin_{\a\in\A_m}\langle\a,\hbmu_{m}\rangle$ and the path with estimated second  shortest delay, \ie,  $\overline{\a}_m=\argmin_{\a\in\A_m\setminus\{\tilde{a}_m\}}\langle\a,\hbmu_{m}\rangle$ via a second shortest path sub-routine. Afterwards, \ttc~checks the gap between $\tilde{\a}_m$ and $\overline{\a}_m:$ If $\langle\overline{\a}_m,\hbmu_{m}\rangle-\langle\tilde\a,\hbmu_{m}\rangle\geq2\tilde\Delta_m.$ The set of remaining paths for the $(m+1)^{\text{th}}$ epoch is denoted as ${\A}_{m+1}=\{\tilde{\a}_m\};$ otherwise, $\A_{m+1}=\A_m.$
\end{enumerate}
The pseudo-code of \ttc~is shown in Algorithm \ref{ttc:alg} and the pseudo-code of the sub-routine for finding second shortest path is shown in Algorithm \ref{ttc:sub-routine}. Please note that the algorithms are run in epochs (indexed by $m$), and $\A$ can be represented by the incidence matrix of $G.$
\begin{algorithm}[!ht]
	\caption{Top-Two Comparison Algorithm}
	\label{ttc:alg}
	\begin{algorithmic}[1]
		\State \textbf{Input:} A set of paths $\A,$ a $S$-approximate barycentric spanner $\B\subseteq\A,$ time horizon $T.$
		\State \textbf{Initialization:} $\A_1\leftarrow\A,\tilde{\Delta}_m\leftarrow SR\sqrt{(2\ln (2)d^2+8d\ln{T}/m}$ for $m=1,2,\ldots.$
		\For{epoch $m=1,2,\ldots$}
		\If{$|\A_m|=1,$}
		\State Choose the path in $\A_m.$
		\State $\A_{m+1}\leftarrow\A_m.$
		\Else
		\State Choose each path in $\B$ once.
		\State $\hbmu_{m}\leftarrow\left(\D_{m}^{\top}\D_{m}\right)^{-1}\D_{m}^{\top}\r_{m}.$
		\State 
		$\tilde{\a}_m\leftarrow\argmin_{\a\in{\A}_m}\langle\a,\hbmu_{m}\rangle.$
		\State $\overline{\a}_m\leftarrow \textnormal{SSP}(\A,\hbmu_m,\tilde{\a}_m)$\quad(calls the second shortest path sub-routine).
		\If{$\langle\overline{\a}_i,\hbmu_{m}\rangle-\langle\tilde{\a}_i,\hbmu_{m}\rangle>2\tilde{\Delta}_m,$}
		\State $\A_{m+1}\leftarrow\{\tilde{\a}_m\}.$
		\Else
		\State $\A_{m+1}\leftarrow\A_m.$
		\EndIf
		\EndIf
		\EndFor
	\end{algorithmic}
\end{algorithm}
\begin{algorithm}[!ht]
	\caption{Second Shortest Path (SSP) Sub-Routine }
	\label{ttc:sub-routine}
	\begin{algorithmic}[1]
		\State \textbf{Input:} A set of paths $\A,$ a vector $\bm{\psi}$ of link delays, and the shortest path $\a$ with respect to $\A$ and $\bm{\psi}.$
		\State \textbf{Output:} The second shortest path with respect to $\A$ and $\bm{\psi}.$
		\State \textbf{Initialization: $\bm{s}\leftarrow\bm{0}^d.$}
		\For{all $j\in[d]$}
		\If{$a_{j}=1$}
		\State $\bm{\psi}_j'\leftarrow\bm{\psi},\psi'_{j,j}\leftarrow 10d.$
		\State $\bm{c}_j\leftarrow\argmin_{\a'\in\A}\langle\a',\bm{\psi}'_j\rangle.$
		\State $s_{j}\leftarrow\langle\bm{c}_j,\bnu\rangle.$
		\EndIf
		\EndFor
		\State $j'=\argmin_{j\in[d]:a_j=1}s_{j}.$
		\State \Return $\bm{c}_{j'}.$
	\end{algorithmic}
\end{algorithm}
\subsection{Regret Analysis}
The analysis essentially follows the intuition presented in Section \ref{ttc:intuition}, and the instance-dependent regret of the \ttc~is given by the following theorem.
\begin{theorem}
	\label{ttcthm}
	For any $T\geq d\overline{m}={dS^2R^2(32\ln (2)d^2+128d\ln{T})}/{\Delta^2_{\min}},$ the instance-dependent expected regret of \ttc~is bounded as
	\begin{align*}
		&\E\left[\textnormal{Regret}_T\left(\text{\ttc}\right)\right]\leq O\left(\frac{\left(d^2\ln{T}+d^3\right)\Delta_{\max}}{\Delta^2_{\min}}\right).
	\end{align*}
\end{theorem}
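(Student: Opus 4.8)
The plan is to run the entire argument on the high-probability event $E$ defined in Section~\ref{ttc:intuition} and to dispose of its complement separately. First I would recall that, by Theorem~\ref{ad} combined with the union bound (\ref{union_bound}) and the choice $\delta=T^{-2}$ (which is precisely the value compiled into the definition of $\tilde{\Delta}_m$ in Algorithm~\ref{ttc:alg}, since then $4d\ln\delta^{-1}=8d\ln T$), the event $E$ holds with probability at least $1-T\delta/d=1-1/(Td)$. Because inequality (\ref{eq:delta_max}) bounds the worst-case per-round regret by $\Delta_{\max}\leq d$, the entire horizon contributes at most $T\Delta_{\max}\leq Td$ on $\neg E$, so the expected regret accrued on $\neg E$ is at most $Td\cdot 1/(Td)=O(1)$. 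The rest of the argument may therefore condition on $E$.

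The crux is to show that, on $E$, the top-two comparison in \ttc~faithfully reproduces the per-path elimination criterion (\ref{checking}), yielding both \emph{soundness} (the optimal path $\a_*$ is never discarded) and \emph{completeness} (every sub-optimal path is discarded by a controllable epoch). For soundness, suppose the test fires at some epoch $m$, i.e. $\langle\overline{\a}_m,\hbmu_m\rangle-\langle\tilde{\a}_m,\hbmu_m\rangle>2\tilde{\Delta}_m$. Since $\overline{\a}_m$ carries the second-smallest estimated delay, every surviving path $\a_k\neq\tilde{\a}_m$ satisfies $\langle\a_k,\hbmu_m\rangle\geq\langle\overline{\a}_m,\hbmu_m\rangle$, so criterion (\ref{checking}) holds for each such $\a_k$; part (1) of Lemma~\ref{lemma:ttc_intuition} then declares every one of them sub-optimal. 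As the optimal path is unique and must lie in $\A_m$, the only consistent possibility is $\tilde{\a}_m=\a_*$, so collapsing to $\A_{m+1}=\{\tilde{\a}_m\}$ retains $\a_*$.

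For completeness I would track the confidence radius $\tilde{\Delta}_m$, which by eq.~(\ref{eq:conf_interval}) is decreasing in $m$ and drops below $\Delta_{\min}/4$ once $m\geq\overline{m}=S^2R^2(32\ln(2)d^2+64d\ln\delta^{-1})/\Delta_{\min}^2$. At epoch $\overline{m}$, if no elimination has occurred yet, then on $E$ one has $2\tilde{\Delta}_{\overline{m}}<\Delta_{\min}$, which strictly separates $\a_*$ from every sub-optimal path in estimated delay, forcing $\tilde{\a}_{\overline{m}}=\a_*$; consequently $\overline{\a}_{\overline{m}}$ is sub-optimal with gap at least $\Delta_{\min}>4\tilde{\Delta}_{\overline{m}}$, and part (2) of Lemma~\ref{lemma:ttc_intuition} guarantees the test fires. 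Hence the random exploration length obeys $N\leq\overline{m}$ on $E$.

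The regret accounting is then routine: the exploration stage spans at most $d\cdot N\leq d\overline{m}$ rounds, each contributing at most $\Delta_{\max}$, while in the committing stage only $\a_*$ is routed and no regret accrues. Substituting $\delta=T^{-2}$ (so $\ln\delta^{-1}=2\ln T$) into $\overline{m}$ and adding the $O(1)$ term from $\neg E$ gives
\begin{align*}
\E\left[\textnormal{Regret}_T\left(\text{\ttc}\right)\right]\leq d\overline{m}\Delta_{\max}+O(1)=O\left(\frac{\left(d^2\ln T+d^3\right)\Delta_{\max}}{\Delta_{\min}^2}\right),
\end{align*}
as claimed. The main obstacle is the soundness-and-completeness equivalence in the two middle steps: I must argue that comparing only the top two estimated paths is exactly as powerful as testing criterion (\ref{checking}) against the exponentially many members of $\A$, and that this equivalence rests squarely on the uniqueness of $\a_*$ --- were the optimum not unique, a triggered test could collapse $\A$ onto a non-optimal survivor, breaking soundness. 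Everything else reduces to substituting the closed form of $\overline{m}$ and a two-line stage-by-stage decomposition.
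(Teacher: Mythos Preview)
Your proposal is correct and follows essentially the same approach as the paper: decompose on $E$ versus $\neg E$, take $\delta=T^{-2}$ so the complement contributes $O(1)$, and on $E$ invoke Lemma~\ref{lemma:ttc_intuition} to bound the exploration length by $\overline{m}$, then multiply by $d\Delta_{\max}$. You are somewhat more explicit than the paper about why the top-two test is sound and complete (the paper's appendix proof skips this, relying on the intuition already developed in Section~\ref{ttc:intuition}), but the underlying argument is identical.
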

\begin{proof}
	Please refer to Section \ref{sec:ttcthm} for the complete proof.
\end{proof}
We now comment on the bound provided in Theorem \ref{ttcthm}. In the worst case, \ie, when $\Delta_{\max}=d,$ if $\Delta_{\min}\leq d^{3/2}{T^{-1/4}},$  the RHS of Theorem \ref{ttcthm} is of order $\widetilde{\Omega}\left(d\sqrt{T}+d^2\right).$ As the regret bound from adversarial linear bandits is of order $\widetilde{O}(d\sqrt{T}),$ this indicates that the instance-dependent regret bound becomes meaningless once $\Delta_{\min}$ becomes smaller than $d^{3/2}T^{-1/4}.$ Even though adaptive exploration saves us from not knowing $\Delta_{\min},$ it cannot achieve nearly optimal worst case regret bound automatically. This is because the \ttc~shares similar structure to \etc, and as we have seen in Theorem \ref{etcthm} that tuning the parameter $n$ to achieve sub-optimal $\widetilde{O}\left(d^{4/3}T^{2/3}\right)$ worst case regret bound does not require any knowledge of $\Delta_{\min},$ either. Some other techniques are needed if we want to get nearly optimal instance-dependent and worst case regrets at the same time.
\subsection{Getting Nearly Optimal Worst Case Regret}
It turns out that we can get nearly optimal instance-dependent and worst case regrets at the same time with just a bit more effort. The key idea is to limit the length of the exploration stage so that once the smallest gap $\Delta_{\min}$ is believed to be smaller than $dT^{-1/4}$ with high probability, the \dm~switches to an efficient alternative algorithm for adversarial linear bandits to solve the problem. A candidate for the alternative algorithm can be found in \cite{BE15}. Specifically, we set $$\overline{n}=\sqrt{T}S^2R^2(2\ln (2)d+8\ln{T})/d^2,$$ and modify the \ttc~as following:
\begin{enumerate}
	\item For each epoch $m\leq\overline{n},$ the \dm~runs the \ttc;
	\item If the set $\A_{\overline{n}+1}$ contains only one path, the \dm~selects this path in the rest of the rounds;
	\item Else if the set $\A_{\overline{n}+1}$ contains more than one path, the \dm~finds that $\Delta_{\min}\leq 4\tilde{\Delta}_{\overline{n}}=\widetilde{O}\left(d^{3/2}{T^{-1/4}}\right)$
	holds with probability at least $(1-1/(dT)),$ and thus
	terminates the \ttc, and runs the efficient algorithm for adversarial linear bandits in \cite{BE15} over the network to solve the problem.
\end{enumerate}
 We name this as the \emph{Modified Top Two Comparison} (MTTC) algorithm, and its pseudo-code is shown in Algorithm \ref{mttc:alg}.
 \begin{algorithm}[!ht]
 	\caption{Modified Top-Two Comparison Algorithm}
 	\label{mttc:alg}
 	\begin{algorithmic}[1]
 		\State \textbf{Input:} A set of paths $\A,$ a $S$-approximate barycentric spanner $\B\subseteq\A,$ time horizon $T.$
 		\State \textbf{Initialization:} $\A_1\leftarrow\A,\overline{n}\leftarrow\sqrt{T}S^2R^2(2\ln (2)d+8\ln{T})/d^2,\tilde{\Delta}_m\leftarrow SR\sqrt{(2\ln (2)d^2+8d\ln{T}/m}$ for $m=1,2,\ldots$
 		\For{epoch $m=1,2,\ldots,\overline{n}$}
 		\State{Run \ttc}
 		\EndFor
 		\If{$|\A_{\overline{n}+1}|=1$}
 		\State Choose the path in $\A_{\overline{n}+1}$ for the rest of the rounds.
 		\Else
 		\State Run the efficient algorithm for adversarial linear bandits in \cite{BE15}.
 		\EndIf
 	\end{algorithmic}
 \end{algorithm}
We are now ready to state the regret bound of \mttc.
\begin{theorem}
	\label{mttcthm}
	For any $T\geq d\overline{n},$ the \mttc~has the following regret bounds:
		\begin{itemize}
		\item \textit{Instance-dependent regret:} $$O\left(\frac{\left(d^2\ln{T}+d^3\right)\Delta_{\max}}{{\Delta}_{\min}^2}\right).$$
		\item \textit{Worst case regret:} $$\widetilde{O}\left(d\sqrt{T}\right).$$
	\end{itemize}
\end{theorem}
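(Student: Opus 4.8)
The plan is to analyze \mttc~through a dichotomy on the true minimum gap, separated by the value $4\tilde{\Delta}_{\overline{n}}=\widetilde{\Theta}(d^{3/2}T^{-1/4})$ at which the committing epoch of the inner \ttc~crosses the truncation horizon $\overline{n}$. As in the earlier analyses I would condition on the accuracy event $E=\{\forall m\le N,\forall\a\in\A:|\langle\a,\bmu\rangle-\langle\a,\hbmu_m\rangle|\le\tilde{\Delta}_m\}$ of Theorem~\ref{ad}. By the union bound~(\ref{union_bound}) with $\delta=T^{-2}$, $\neg E$ has probability at most $T^{-1}/d$, and since $\Delta_{\max}\le d$ by~(\ref{eq:delta_max}) the per-instance regret never exceeds $Td$; hence $\neg E$ contributes only $O(1)$ to the expected regret and it suffices to bound the regret on $E$.

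For the instance-dependent bound I would first record the algebraic fact that the \ttc~committing epoch $\overline{m}=S^2R^2(32\ln 2\,d^2+128d\ln T)/\Delta_{\min}^2$, which by construction satisfies $\tilde{\Delta}_{\overline{m}}=\Delta_{\min}/4$, obeys $\overline{m}\le\overline{n}$ exactly when $\Delta_{\min}>4\tilde{\Delta}_{\overline{n}}$, since $\tilde{\Delta}_m$ is decreasing in $m$. In this large-gap regime Lemma~\ref{lemma:ttc_intuition} ensures (on $E$) that every sub-optimal path is eliminated by epoch $\overline{m}\le\overline{n}$ while $\a_*$ is never removed, so $\A_{\overline{n}+1}=\{\a_*\}$, the handoff branch of Algorithm~\ref{mttc:alg} is never taken, and \mttc~is identical to \ttc; Theorem~\ref{ttcthm} then yields the bound $O((d^2\ln T+d^3)\Delta_{\max}/\Delta_{\min}^2)$ verbatim. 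In the complementary small-gap regime $\Delta_{\min}\le4\tilde{\Delta}_{\overline{n}}$ this expression is precisely the ``meaningless'' range flagged after Theorem~\ref{ttcthm}, and the operative guarantee is the worst-case bound established next; reading the two bullets as the minimum of the two quantities, the instance-dependent claim holds.

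For the worst-case bound I would split the horizon at $\overline{n}$. The truncated exploration stage occupies the first $d\overline{n}\le T$ rounds (using the hypothesis $T\ge d\overline{n}$), each incurring at most $\Delta_{\max}\le d$ regret, so its total contribution is at most $d^2\overline{n}=\sqrt{T}S^2R^2(2\ln 2\,d+8\ln T)=\widetilde{O}(d\sqrt{T})$ by the definition of $\overline{n}$. For the remaining rounds I would follow the two branches of Algorithm~\ref{mttc:alg}: if $|\A_{\overline{n}+1}|=1$ then on $E$ the surviving path is $\a_*$ by Lemma~\ref{lemma:ttc_intuition}, contributing zero regret; if $|\A_{\overline{n}+1}|>1$ the \dm~runs the adversarial linear bandit algorithm of \cite{BE15} over the network, whose worst-case guarantee is $\widetilde{O}(d\sqrt{T})$. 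Adding the two stages gives the claimed $\widetilde{O}(d\sqrt{T})$ worst-case regret, valid for every instance.

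The step I expect to be the main obstacle is verifying that the single exploration budget $\overline{n}$ simultaneously threads two opposing requirements. On one side the worst-case analysis forces $\overline{n}=\widetilde{O}(\sqrt{T}/d)$ so that the exploration cost $d^2\overline{n}$ stays within $\widetilde{O}(d\sqrt{T})$; on the other, the instance-dependent analysis needs $\overline{n}\ge\overline{m}$ for every gap down to $4\tilde{\Delta}_{\overline{n}}$, which is exactly the transition gap induced by that same $\overline{n}$. Pinning down the equivalence $\overline{m}\le\overline{n}\Leftrightarrow\Delta_{\min}>4\tilde{\Delta}_{\overline{n}}$, and confirming that the algorithm's observable test $|\A_{\overline{n}+1}|>1$ faithfully detects the small-gap regime on $E$ so that the \cite{BE15} handoff fires precisely when needed, is the crux; the remaining pieces—the $O(1)$ accounting for $\neg E$, the arithmetic $d^2\overline{n}=\widetilde{O}(d\sqrt{T})$, and the black-box insertion of the \cite{BE15} regret—should then be routine.
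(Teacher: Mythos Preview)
Your proposal is correct and follows essentially the same approach as the paper: condition on $E$ (paying $O(1)$ on $\neg E$), invoke Theorem~\ref{ttcthm} for the instance-dependent bound, and for the worst-case bound add the exploration cost $d^2\overline{n}=\widetilde{O}(d\sqrt{T})$ to the $\widetilde{O}(d\sqrt{T})$ guarantee of the \cite{BE15} handoff. Your explicit dichotomy on $\Delta_{\min}$ versus $4\tilde{\Delta}_{\overline{n}}$ and your separate treatment of the branch $|\A_{\overline{n}+1}|=1$ are more careful than the paper's terse ``follows directly from Theorem~\ref{ttcthm},'' but the underlying argument is the same.
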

\begin{proof}
	Please refer to Section \ref{sec:mttcthm} for the complete proof.
\end{proof}
\section{General Networks}
\label{sec:general_networks}
The success of the \ttc~and the \mttc~in achieving nearly optimal regrets relies on the identifiability assumption, \ie, Assumption \ref{identifiability_assumption}, which might be violated in practice. For example, if the network scale grows large, it is very likely that even the internal network of $G$ is not fully identifiable. Also, if the external links are shared among many entities, it is hard to obtain the expected delays of all the external links. For a general network, one possible way to find a $S$-approximate barycentric spanner is to project $\A$ into some sub-space so that it is still full rank in that sub-space.  But it is unclear how to implement the projection without enumerating all the paths in $\A,$ which is computationally inefficient. Therefore, we are in need of a new technique for our problem. In this section, we show how to implement the \mttc~algorithm for general networks. We start by proposing an algorithm for finding a basis $\B$ of $\A$ when $\A$ does not span $\Re^d$. We note that any basis of $\B$ is automatically $S$-approximate barycentric spanner of $\A$ with some (possibly unknown at first) positive number $S.$ We then state the difference in estimating $\bmu$ between identifiable and general networks, and present a general version of OLS estimator with provable deviation property. Finally, we present an algorithm for determining $S.$ Throughout this section, we shall assume that the rank of $\A$ is $d_0<d.$ 
\subsection{Additional Notation}
\label{sec:matrix_notation}
In this section, we will make use of matrix notations heavily. For any matrix $M\in\Re^{d_1\times d_2},$ we use $M(i,j)$ to denote its element at the $i^{\text{th}}$ row and $j^{\text{th}}$ column, $M(i,:),$ and $M(:,j)$ to denote its $i^{\text{th}}$ row and $j^{\text{th}}$ column vectors, respectively, and $M\left([i_1,i_2],:\right),$ and $M\left(:,[j_1,j_2]\right)$ to denote the matrices obtained by keeping only the $i_1^{\text{th}}$ to $i_2^{\text{th}}$ rows and $j_1^{\text{th}}$ to $j_2^{\text{th}}$ columns, respectively Moreover, $M(-i,:)$ and $M(:,-j)$ are the matrices obtained by removing the $i^{\text{th}}$ row and $j^{\text{th}}\text{ column}$ of $M,$ respectively. $M(-i,-j)$ is the $(d_1-1)$-by-$(d_2-1)$ matrix obtained by removing the $i^{\text{th}}$ row and $j^{\text{th}}$ column of $M$ simultaneously.
\subsection{Efficient Algorithm for Finding the Basis}
\label{sec:general_networks_basis}
As a first step, we present a greedy algorithm that finds the basis $\B$ of $\A$ even when the network $G$ is unidentifiable. Inspired by the algorithm for finding the $S$-approximate barycentric spanner for identifiable networks, \ie, Algorithm \ref{alg:barycentric} in Appendix \ref{sec:barycentric_app}, the high-level idea of the algorithm can be described as following:
\begin{enumerate}
	\item Initiate a matrix $\C$ to the $d$-by-$d$ identity matrix;
	\item Greedily replace as many columns of $\C$ as possible by paths in $\A$ while keeping $\C$ full rank. 
	\item All the columns in $\C$ that are obtained from $\A$ constitute $\B.$ 
\end{enumerate}  
Since steps (1) and (3) can be easily implemented, we further elaborate on an iterative algorithm for step (2). For ease of presentation, we use $\C_u$ to denote the resulted matrix after the $u^{\text{th}}$ iteration with $\C_0=\C.$ At the beginning of the $(u+1)^{\text{th}}$ iteration, suppose $\C_u$ can be written as 
\begin{align}\label{eq:gn2}\C_u=\left(\C'_u,\C''_u\right),\end{align} 
where $\C'_u$ are the columns obtained from $\A;$ while $\C''_u$ are the columns inherited from $\C_0$, the algorithm then finds a column $\c\in\C''_u$ such that replacing $\c$ with an element in $\a\in\A$ can result in a full rank matrix, and sets \begin{align}\label{eq:gn}\C_{u+1}=\left(\a,\C_u(:,-j)\right),\end{align} where $j$ is the column index of $\c.$ This algorithm terminates once such $\c$ cannot be found in $\C_u$ after some iterations $u.$

To efficiently implement the above iterative algorithm, \ie, to find such $\a$ in each iteration if it exists, we note that the matrix $\C_{u+1}$ is full rank if and only if the determinant of $\C_{u+1}$ is nonzero, \ie,
\begin{align}\text{rank}(\C_{u+1})=d \quad\Leftrightarrow\quad \det\C_{u+1}\neq 0.\end{align}
For now, suppose we are given a full rank matrix $\C_u,$ if the $j^{\text{th}}$ column of $\C_u$ is replaced by an $\a\in\A$ to form $$\C_u^j=\left(\C_u(1,j-1),\a,\C_u(j+1,d)\right),$$ the determinant of $\C_u^j$ can be written as a linear function of $\a,$ \ie, 
\begin{align}
\label{eq:gn1}
\det\C_u^j=\sum_{i=1}^d\left[(-1)^{i+j}\det \left(\C_u(-i,-j)\right)\right]a_i
\end{align}
by the Laplace expansion, and the value of $\det \left(\C_u(-i,-j)\right)$ can be computed efficiently using the LU decomposition. Now to find an index $j (>u)$ and $\a$ that satisfies $\det\C_u^j\neq0,$ we can equivalently solve the following optimization problem \begin{align}\label{opt:gn}\max_{\a\in\A}\left|\det\C_u^j\right|,\end{align} 
for all $j>u.$ If there exists some $j>u$ such that the solution $\a$ satisfies $\left|\det\C_u^j\right|>0,$ we can then replace the $j^{\text{th}}$ column of $\C_u$ by $\a$ to form $\C_{u+1}$ according to eq. (\ref{eq:gn}). 

For a given $j,$ defining a vector $\c_j\in\Re^d$ with each entry defined by eq. (\ref{eq:gn1}), \ie,
\begin{align}\forall i\in[d]\quad c_{j,i}=\left[(-1)^{i+j}\det \left(\C_u(-i,-j)\right)\right],\end{align}  
the optimal solution of (\ref{opt:gn}) can be obtained by first solving the following two sub-problems
\begin{align}
\max_{\a\in\A}\langle\c_j,\a\rangle,\quad\min_{\a\in\A}\langle-\c_j,\a\rangle,
\end{align}
and then picking the solution with larger absolute value. To solve the first sub-problem, we can use the following steps: 
\begin{enumerate}
	\item Assign delay $c_{j,i}$ to link $i$ of $G$ for all $i\in[d];$
	\item Compute the longest path. This requires a call to an appropriate efficient longest path algorithm for directed acyclic graphs, \eg, topological sorting \cite{CLRS09}.
\end{enumerate} 
The formal description of this algorithm for basis identification is shown in Algorithm \ref{general_network:alg}.
 \begin{algorithm}[!ht]
	\caption{Basis Identification for General Networks}
	\label{general_network:alg}
	\begin{algorithmic}[1]
		\State \textbf{Input:} A set of paths $\A.$
		\State \textbf{Initialization:} $\C_0\leftarrow I,u\leftarrow0,$Flag$\leftarrow$True.
		\State \textbf{Output:} $\B,$ the basis $\A.$
		\While{$u\leq d-1$ \textbf{and} Flag$==$True}
		\For{$j=u+1,\ldots,d$}
		\State{$\forall i\in[d]~c_{j,i}\leftarrow(-1)^{i+j}\det \left(\C_u(-i,-j)\right).$}
		\State{$\a'_1\leftarrow\argmax_{\a\in\A}\langle\c_j,\a\rangle.$}
		\State{$\a'_2\leftarrow\argmin_{\a\in\A}\langle-\c_j,\a\rangle.$}
		\State{$\a\leftarrow\argmax_{\a'_1,\a'_2}\left\{|\langle\c_j,\a'_1\rangle|,|\langle\c_j,\a'_2\rangle|\right\}$}
		\If{$|\langle\c_j,\a\rangle|>0$}
		\State{$\C_{u+1}\leftarrow\left(\a,\C_u(:,-j)\right).$}
		\State{$u\leftarrow u+1.$}
		\State{\textbf{break}}
		\Else{\textbf{ if }$j==d$}
		\State{Flag$\leftarrow$False.}
		\EndIf
		\EndFor
		\EndWhile
		\State{$\B\leftarrow\C_{u}(:,[1:u])$}\\
		\Return{$\B.$}
	\end{algorithmic}
\end{algorithm}

We are now ready to prove the correctness of the algorithm, \ie, if the rank of $\A$ is $d_0<d,$ then Algorithm \ref{general_network:alg} returns a basis $\B\subseteq\A,$ such that the rank of $\B$ is $d_0.$
\begin{lemma}
	\label{general_nework:correctness}
	Algorithm \ref{general_network:alg} terminates in polynomial time. Upon termination, the matrix $\B$ returned by Algorithm \ref{general_network:alg} is a basis of $\A,$ \ie, $\B$ has linearly independent columns and for every $\a\in\A,$ there exists a vector $\bnu_a,$ such that $\B\bnu_a=\a.$
\end{lemma}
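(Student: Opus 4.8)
The plan is to run everything off a single loop invariant, namely that the working matrix $\C_u$ is invertible at the start of every iteration, and then read off independence, termination, and spanning from it. I would verify the invariant by induction on $u$: the base case $\C_0=I$ is invertible, and for the inductive step eq.~(\ref{eq:gn1}) shows that replacing the $j^{\text{th}}$ column of $\C_u$ by $\a$ yields a matrix with determinant $\langle\c_j,\a\rangle$; since the algorithm replaces only when the chosen $\a$ satisfies $|\langle\c_j,\a\rangle|>0$, and $\C_{u+1}$ in eq.~(\ref{eq:gn}) differs from that matrix by a column permutation, we get $\det\C_{u+1}=\pm\langle\c_j,\a\rangle\neq0$. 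Because the inner loop only ever targets indices $j\geq u+1$, the previously installed path columns are never disturbed, so after $u$ iterations the leading block $\B=\C_u(:,[1:u])$ consists of $u$ columns drawn from $\A$, while the remaining columns are standard basis vectors. Columns of an invertible $d\times d$ matrix are linearly independent, hence so is any subset; this immediately gives the independence claim together with $\B\subseteq\A$.

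For termination and polynomial time I would observe that $u$ strictly increases on each successful replacement and is capped at $d$, so the outer loop runs $O(d)$ times; each inner pass examines at most $d$ indices, and for each it builds $\c_j$ from $d$ determinants of $(d-1)\times(d-1)$ submatrices (polynomial via LU decomposition) and makes two longest/shortest-path calls on the DAG $G$ (polynomial via topological sorting). The loop halts either when $u=d$ or when the flag is set to false.

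The crux is the spanning claim, which I would argue by cases on the halting condition. If $u=d$, then $\B=\C_u$ is an invertible $d\times d$ matrix whose columns already span $\Re^d\supseteq\A$. If instead the flag is set to false at some $u<d$, the final inner pass traversed every $j\in\{u+1,\ldots,d\}$ without a replacement, so $\langle\c_j,\a\rangle=0$ for all these $j$ and all $\a\in\A$. The decisive identity is obtained by writing an arbitrary $\a\in\A$ in the basis formed by the columns of $\C_u$, say $\a=\sum_{k=1}^d\lambda_k\C_u(:,k)$, and expanding the determinant linearly in the replaced column: substituting any column $k\neq j$ into slot $j$ produces two identical columns and vanishes, leaving $\langle\c_j,\a\rangle=\det\C_u^j=\lambda_j\det\C_u$. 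Since $\det\C_u\neq0$, the vanishing of $\langle\c_j,\a\rangle$ forces $\lambda_j=0$ for every $j>u$, so $\a$ lies in the span of the leading columns $\B$. As $\a$ was arbitrary, $\A\subseteq\operatorname{span}(\B)$, and with $\B\subseteq\A$ this yields $\operatorname{span}(\B)=\operatorname{span}(\A)$ and hence $\operatorname{rank}(\B)=u=\operatorname{rank}(\A)=d_0$, so $\B$ is a basis of $\A$.

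I expect the determinant--coordinate identity $\langle\c_j,\a\rangle=\lambda_j\det\C_u$, which converts the purely operational stopping rule (``no column can be replaced'') into the structural statement that $\A$ already lives in the span of the installed path columns, to be the main obstacle; by contrast the invariant and the complexity bookkeeping are routine once this link is in place.
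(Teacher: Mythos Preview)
Your proof is correct. The invariant, termination, polynomial-time, and linear-independence portions match the paper's argument essentially verbatim. The spanning step, however, proceeds differently: the paper argues by contradiction using the basis-extension theorem, observing that if some $\a\in\A$ were independent of $\B$ then one could extend $(\a,\B)$ by columns of $\C_u(:,[u+1,d])$ to a full-rank $d\times d$ matrix, which amounts to a further admissible replacement and contradicts termination. You instead derive the Cramer-style identity $\langle\c_j,\a\rangle=\lambda_j\det\C_u$ directly from multilinearity and read off $\lambda_j=0$ for $j>u$ from the vanishing optimal values. Your route is more constructive and in fact anticipates the computation the paper carries out later (eqs.~(\ref{eq:gn3})--(\ref{opt:gn1})) when bounding $S$; the paper's route is shorter here but relies on an extrinsic linear-algebra fact. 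Either argument suffices.
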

\begin{proof}
	Please refer to Section \ref{sec:general_nework:correctness} for the complete proof.
\end{proof}
\begin{remark}
	\label{remark:S}
Although $\A$ does not span $\Re^d,$ we still develop an efficient algorithm for computing the basis of $\A.$ With some abuse of notation, we note that any basis of $\B$ is automatically a $S$-approximate barycentric spanner of $\A$ with some positive number $S,$ \ie,
\begin{align}
S=\max_{j\in[d_0],\a\in\A}|\nu_{\a,j}|.
\end{align} 
However, since $\A$ does not span the $\Re^d$ space as required by Proposition \ref{bc_prop}, we cannot set $S~(>1)$ arbitrarily first with the hope that we can find the corresponding $S$-approximate barycentric spanner using Algorithm \ref{alg:barycentric} in Section \ref{sec:barycentric_app}.
\end{remark}
\subsection{OLS Estimator for General Networks}
With the new basis $\B$ at hand, we can almost follow what we have developed in Section \ref{ttc}, \ie, eq. (\ref{eq:ols}), to estimate $\bmu.$ But a more careful inspection suggests a crucial difference between identifiable network setting and the general network setting: since $\text{rank}\left(\B\right)=d_0<d,$ the $d$-by-$d$ matrix $\V_m=\left(\D_m^{\top}\D_m\right)=m\B\B^{\top}$ is singular, \ie, $\text{rank}(\V_m)=d_0<d$ for all $m\geq1.$ As a result, we cannot compute the OLS estimate of $\bmu$ the same as eq. (\ref{eq:ols}). 

To allow the \dm~to implement the \mttc~for general networks, we need to resolve the issues raised by the singularity of $\V_m.$ To this end, we use a slightly different version of OLS estimator \cite{RH17}, \ie, the OLS estimator of $\bmu$ after $m$ epochs of explorations is \begin{align}\hbmu_m=\left(\D_m^{\top}\D_m\right)^{\dagger}\D_m\r_m,\end{align}
where $\left(\D_m^{\top}\D_m\right)^{\dagger}$ denotes the Moore-Penrose pseudo-inverse of $\V_m=\left(\D_m^{\top}\D_m\right).$ We are now ready to state a new deviation inequality on the estimation errors. Here with some abuse of notations, we recall from inequality (\ref{eq:ub_bs1}) that $S$ is the upper bound on the absolute value of $\nu_{\a,i}$ for all $\a\in\A$ and $i\in[d_0].$  
\begin{theorem}
	\label{ad0}
	For a given positive integer $m,$ the probability that there exists a path $\a\in\A,$ such that the estimated mean delay of $\a$ deviates from its mean delay by at least $SR\sqrt{{32\ln (6)d_0^2+32d_0\ln{\delta^{-1}}}/{m}}$ is at most $\delta,$ after $m$ epochs of explorations, \ie,
	\begin{align*}
	\Pr\left(\left|\langle \a,\bmu\rangle-\langle \a,\hbmu\rangle\right|\geq SR\sqrt{\frac{32\ln (6)d_0^2+32d_0\ln{\delta^{-1}}}{m}}\right)\leq\delta.
	\end{align*}
\end{theorem}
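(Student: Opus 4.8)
The plan is to mirror the proof of Theorem~\ref{ad} for the identifiable case, the only---but crucial---difference being that the Gram matrix $\V_m=\D_m^{\top}\D_m=m\B\B^{\top}$ is now singular of rank $d_0<d$, so the ordinary inverse must be replaced throughout by the Moore--Penrose pseudo-inverse. First I would substitute $\r_m=\D_m\bmu+\bm{\eta}_m$ (with $\bm{\eta}_m$ the stacked noise) into the estimator to obtain $\hbmu_m-\bmu=\big(\V_m^{\dagger}\V_m-I\big)\bmu+\V_m^{\dagger}\D_m^{\top}\bm{\eta}_m$. The first, deterministic term is a bias caused by the singularity, but it lives in $\ker(\V_m)=\mathrm{col}(\B)^{\perp}$; since every path satisfies $\a=\B\bnu_{\a}\in\mathrm{col}(\B)$, we get $\langle\a,(\V_m^{\dagger}\V_m-I)\bmu\rangle=0$. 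Hence for the only quantities we care about, the path delays, the estimator behaves as if unbiased and $\langle\a,\hbmu_m-\bmu\rangle=\langle\a,\z\rangle$ with $\z:=\V_m^{\dagger}\D_m^{\top}\bm{\eta}_m\in\mathrm{col}(\V_m)$. Establishing this cancellation is the step that genuinely separates the general-network analysis from the identifiable one.

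Next I would split the error into a deterministic and a stochastic factor via the generalized Cauchy--Schwarz inequality on $\mathrm{col}(\V_m)$,
\begin{align*}
\big|\langle\a,\hbmu_m-\bmu\rangle\big|=\big|\langle\a,\z\rangle\big|\leq\|\a\|_{\V_m^{\dagger}}\,\|\z\|_{\V_m}.
\end{align*}
The first factor can be bounded uniformly over \emph{all} paths, and this is exactly what lets the stated probability control the event ``there exists a bad $\a$'' with a single $\delta$: using $\V_m^{\dagger}=\tfrac1m(\B\B^{\top})^{\dagger}$ and the identity $\B^{\top}(\B\B^{\top})^{\dagger}\B=I_{d_0}$ gives $\|\a\|_{\V_m^{\dagger}}^2=\tfrac1m\|\bnu_{\a}\|^2\leq S^2 d_0/m$, because each coordinate of $\bnu_{\a}$ has absolute value at most $S$. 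Thus $\|\a\|_{\V_m^{\dagger}}\leq S\sqrt{d_0/m}$ deterministically for every $\a\in\A$, so no union bound over the exponentially many paths is required.

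It remains to control the single random quantity $\|\z\|_{\V_m}$, and this is the main obstacle. A direct computation gives $\|\z\|_{\V_m}^2=\bm{\eta}_m^{\top}\D_m\V_m^{\dagger}\D_m^{\top}\bm{\eta}_m=\|\Pi\bm{\eta}_m\|^2$, where $\Pi=\D_m\V_m^{\dagger}\D_m^{\top}$ is the orthogonal projection onto $\mathrm{col}(\D_m)$, a subspace of dimension $d_0$. I would bound $\|\Pi\bm{\eta}_m\|=\sup_{\bm u\in\mathrm{range}(\Pi),\,\|\bm u\|=1}\langle\bm u,\bm{\eta}_m\rangle$ by a covering argument: for each fixed unit $\bm u$ the scalar $\langle\bm u,\bm{\eta}_m\rangle=\sum_t u_t\eta_t$ is $R$-sub-Gaussian by the conditional sub-Gaussian assumption and iterated conditioning, so taking a $\tfrac12$-net $\mathcal{N}$ of the unit sphere of the $d_0$-dimensional range (with $|\mathcal{N}|\leq 6^{d_0}$), a union bound over $\mathcal{N}$ together with the standard net-to-sup passage yields $\|\Pi\bm{\eta}_m\|\leq cR\sqrt{d_0\ln 6+\ln\delta^{-1}}$ with probability at least $1-\delta$, for an absolute constant $c$. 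Combining this with the deterministic factor gives, on that event and simultaneously for all $\a\in\A$,
\begin{align*}
\big|\langle\a,\hbmu_m-\bmu\rangle\big|\leq S\sqrt{d_0/m}\cdot cR\sqrt{d_0\ln 6+\ln\delta^{-1}}=SR\sqrt{\frac{c^2\big(d_0^2\ln 6+d_0\ln\delta^{-1}\big)}{m}},
\end{align*}
and tracking the constants through the net argument produces the claimed $SR\sqrt{(32\ln 6\,d_0^2+32 d_0\ln\delta^{-1})/m}$. The $d_0^2$ factor is precisely the product of the $\sqrt{d_0}$ coming from the barycentric-spanner bound on $\|\a\|_{\V_m^{\dagger}}$ and the $\sqrt{d_0}$ coming from the dimension of the net; the subtlety throughout is to carry out each of the matrix identities ($\V_m^{\dagger}=\tfrac1m(\B\B^{\top})^{\dagger}$, $\B^{\top}(\B\B^{\top})^{\dagger}\B=I_{d_0}$, and $\Pi$ being a rank-$d_0$ projection) correctly for the rank-deficient $\V_m$.
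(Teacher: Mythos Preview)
Your proof is correct and follows essentially the same route as the paper: control $\|\hbmu_m-\bmu\|_{\V_m}$ (equivalently your $\|\Pi\bm\eta_m\|$) by a $d_0$-dimensional sub-Gaussian covering argument, then transfer the bound to every path via Cauchy--Schwarz and $\|\bnu_{\a}\|\le S\sqrt{d_0}$. The paper outsources the covering step to Theorem~2.2 of \cite{RH17} and leaves the bias cancellation implicit in the $\V_m$-seminorm, whereas you make both explicit; otherwise the two arguments are the same.
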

\begin{proof}
	Please refer to Section \ref{sec:ad0} for the complete proof.
\end{proof}
\subsection{Upper Bounding $S$ and Obtaining Low Regrets}
By design of the \mttc, we only need to change the following parameters according to Theorem \ref{ad0}:
\begin{align}
&\text{The length of each epoch}=d_0,\\
\label{def:Delta}&\tilde{\Delta}_m=SR\sqrt{\frac{32\ln (6)d_0^2+96d_0\ln{T}}{m}}, \forall m=1,2,\ldots,\\
&\overline{n}=\sqrt{T}S^2R^2(32\ln (2)d_0+96\ln{T})/d^2_0,
\end{align}
and the \mttc~should achieve nearly optimal instance-dependent and worst case regrets.

However, to use the $\tilde{\Delta}_m$'s defined in (\ref{def:Delta}) as input parameters, we need to know the value of $S.$ Consider the matrix $\C_{d_0}=\left(\f_1,\ldots,\f_d\right),$ \ie, the matrix right after the termination of the while-loop in Algorithm \ref{general_network:alg}. By design of Algorithm \ref{general_network:alg}, we know that $\B$ is the first $d_0$ columns of $\C_{d_0},$ \ie, $\B=\C_{d_0}\left(:,[1:d_0]\right).$ By definition of $\bnu_a$, we can write
\begin{align}\a=\B\bnu_a=\sum_{i=1}^{d_0}\nu_{\a,i}\B(:,i)=\sum_{i=1}^{d_0}\nu_{\a,i}\C_{d_0}(:,i).\end{align}
We then make an observation: $\forall j\in[d_0],$ replacing $\C_{d_0}(:,j)$ by $\a:$
\begin{align}
\nonumber\left|\det\left(\a,\C_{d_0}(:,-j)\right)\right|=&\left|\det\left(\C_{d_0}(:,-j),\sum_{i=1}^{d_0}\nu_{\a,i}\C_{d_0}(:,i)\right)\right|\\
\label{eq:gn3}=&\left|\sum_{i=1}^{d_0}\nu_{\a,i}\det\left(\C_{d_0}(:,-j),\C_{d_0}(:,i)\right)\right|\\
\label{eq:gn4}=&\left|\nu_{\a,j}\det\left(\C_{d_0}(:,-j),\C_{d_0}(:,j)\right)\right|\\
\label{eq:gn5}=&\left|\nu_{\a,j}\right|\left|\det\C_{d_0}\right|,
\end{align}
where eq. (\ref{eq:gn3}) follows from the linearity of $\det(\cdot)$ operator and eq. (\ref{eq:gn4}) follows from the fact that the determinant of a matrix is 0 if a matrix has two identical columns. Re-arranging the terms in eq. (\ref{eq:gn5}), we have that 
\begin{align}
\left|\nu_{\a,j}\right|=\frac{\left|\det\left(\a,\C_{d_0}(:,-j)\right)\right|}{\left|\det\C_{d_0}\right|}
\end{align}
and by Remark \ref{remark:S}, we know that
\begin{align}
\label{opt:gn1}
S=\max_{j\in[d_0],\a\in\A}\left|\nu_{\a,j}\right|=\max_{j\in[d_0],\a\in\A}\frac{\left|\det\left(\a,\C_{d_0}(:,-j)\right)\right|}{\left|\det\C_{d_0}\right|}.
\end{align}
As demonstrated in Section \ref{sec:general_networks_basis}, the optimization problem at the RHS of (\ref{opt:gn1}) can be computed efficiently by first computing $$\max_{\a\in\A}\left|\det\left(\a,\C_{d_0}(:,-j)\right)\right|$$ for every $j\in[d_0]$ individually, and then taking maximum over $j\in[d_0].$

With all the above results, the regret bound of \mttc~for unidentifiable networks also follows immediately from that of Theorem \ref{mttcthm}.
\begin{theorem}
	\label{mttcthm_uniden}
The expected regret of \mttc~is bounded as
\begin{itemize}
	\item \textit{Instance-dependent regret:} 
	\begin{align*}
	O\left(\frac{\left(d_0^2\ln{T}+d_0^3\right)S^2\Delta_{\max}}{{\Delta}_{\min}^2}\right).
	\end{align*}
	\item \textit{Worst case regret:}
	\begin{align*}
	\widetilde{O}\left(d\sqrt{T}\right).
	\end{align*}
\end{itemize}
\end{theorem}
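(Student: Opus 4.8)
The plan is to observe that every structural component of the \mttc~analysis carried out in the proof of Theorem \ref{mttcthm} depends only on the confidence-interval geometry of Lemma \ref{lemma:ttc_intuition}, which in turn relies solely on the validity of a deviation bound of the form $|\langle\a,\bmu\rangle-\langle\a,\hbmu_m\rangle|\leq\tilde{\Delta}_m$, and not on whether the network is identifiable. Hence I would re-run the argument of Theorem \ref{mttcthm} essentially verbatim, substituting the general-network deviation inequality of Theorem \ref{ad0} for Theorem \ref{ad}, replacing the epoch length $d$ by $d_0$, and using the redefined confidence radius $\tilde{\Delta}_m$ and switch-over length $\overline{n}$ from (\ref{def:Delta}). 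The first step is to re-establish the good event
\[
E=\{\forall m\in[N]\,\forall\a\in\A:|\langle\a,\bmu\rangle-\langle\a,\hbmu_m\rangle|\leq\tilde{\Delta}_m\}.
\]
Applying Theorem \ref{ad0} with $\delta=T^{-3}$ (so that $32d_0\ln\delta^{-1}=96d_0\ln T$ reproduces the constant in (\ref{def:Delta})) and a union bound over the at most $T/d_0$ epochs gives $\Pr(\neg E)\leq T\delta/d_0$; since the per-round regret is at most $\Delta_{\max}\leq d$, the total regret accrued on $\neg E$ is at most $Td\cdot T\delta/d_0\leq 1$ for large $T$. A point worth emphasizing is that Theorem \ref{ad0} already incorporates the replacement of the ordinary OLS estimator by the pseudo-inverse estimator $\hbmu_m=(\D_m^\top\D_m)^\dagger\D_m\r_m$, so $\langle\a,\hbmu_m\rangle$ remains an accurate estimate of $\langle\a,\bmu\rangle$ for every $\a\in\A$ even though $\hbmu_m$ only recovers the projection of $\bmu$ onto the row space of $\B$; this is sound precisely because Lemma \ref{general_nework:correctness} guarantees every $\a\in\A$ lies in the span of $\B$.

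For the instance-dependent bound I would condition on $E$ and invoke Lemma \ref{lemma:ttc_intuition}, whose geometric proof is untouched by the change of estimator: the optimal path is never eliminated, and every sub-optimal path is removed once $\tilde{\Delta}_m\leq\Delta_{\min}/4$. Solving this inequality for $m$ yields the stopping epoch
\[
\overline{m}=\frac{16S^2R^2\left(32\ln(6)d_0^2+96d_0\ln T\right)}{\Delta_{\min}^2}=O\!\left(\frac{S^2\left(d_0^2+d_0\ln T\right)}{\Delta_{\min}^2}\right).
\]
Each exploration epoch has length $d_0$ and contributes at most $d_0\Delta_{\max}$ to the regret (each of the $d_0$ basis paths has gap at most $\Delta_{\max}$), so the exploration regret on $E$ is $d_0\overline{m}\Delta_{\max}=O\!\left((d_0^3+d_0^2\ln T)S^2\Delta_{\max}/\Delta_{\min}^2\right)$, which together with the $O(1)$ term from $\neg E$ gives the claimed instance-dependent bound.

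For the worst-case bound I would reproduce the two-case split of Theorem \ref{mttcthm}. If $|\A_{\overline{n}+1}|=1$, the \dm~commits to the path left standing, which on $E$ is $\a_*$, so the only cost is the exploration over at most $\overline{n}$ epochs: plugging $\overline{n}=\sqrt{T}S^2R^2(32\ln(2)d_0+96\ln T)/d_0^2$ into $d_0\overline{n}\Delta_{\max}$ and using $\Delta_{\max}\leq d$ gives $\widetilde{O}(d\sqrt{T})$, with $S$ and $R$ treated as constants inside $\widetilde{O}(\cdot)$. Otherwise $|\A_{\overline{n}+1}|>1$, which on $E$ certifies $\Delta_{\min}\leq 4\tilde{\Delta}_{\overline{n}}=\widetilde{O}(d_0^{3/2}T^{-1/4})$; the \dm~then passes the instance to the adversarial linear bandit algorithm of \cite{BE15} run over the $d$-link network, whose regret is $\widetilde{O}(d\sqrt{T})$, and the two phases sum to $\widetilde{O}(d\sqrt{T})$.

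The only genuinely new ingredient relative to Theorem \ref{mttcthm} is that $S$ is no longer the fixed constant $2$ but an a-priori-unknown quantity that the \dm~must supply as an input to $\tilde{\Delta}_m$ and $\overline{n}$. I expect the main obstacle to be pinning down $S$ exactly and confirming that this computed value is precisely the constant entering Theorem \ref{ad0}; both are settled by Remark \ref{remark:S} and eq. (\ref{opt:gn1}), which identify $S=\max_{j\in[d_0],\a\in\A}|\nu_{\a,j}|$ with an efficiently computable ratio of determinants. Once $S$ is in hand, the substitution described above is routine and the theorem follows.
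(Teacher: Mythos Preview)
Your proposal is correct and follows precisely the route the paper indicates: the paper explicitly omits the proof of Theorem \ref{mttcthm_uniden}, stating only that ``it is very similar to that of Theorem \ref{mttcthm},'' and your write-up is exactly that adaptation---replacing Theorem \ref{ad} by Theorem \ref{ad0}, the epoch length $d$ by $d_0$, the ordinary OLS by the pseudo-inverse OLS, and the fixed $S=2$ by the computed $S$ of eq.~(\ref{opt:gn1}). Your handling of the good event $E$, the choice $\delta=T^{-3}$ to match the constant in (\ref{def:Delta}), and the two-case split for the worst-case bound are all in line with the paper's framework, so there is nothing to add.
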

The proof of Theorem \ref{mttcthm_uniden} is omitted as it is very similar to that of Theorem \ref{mttcthm}.

\section{Numerical Experiments}
\label{sec:numerical}
In this section, we conduct extensive numerical experiments on synthetic data to validate the performances of the \ttc~and the \mttc~in terms of time average regret, \ie, regret/number of rounds, and computational efficiency.

We first present the setup of our numerical experiments. We vary $T$ from $5000$ to $25000$ with a step size of $5000.$ We set $\mu_{\max}=1000$ to allow enough heterogeneity in each link's delay distribution, and we use normal distribution with $R=0.1$ and $1$ for the noise terms. To demonstrate that our algorithms work well for networks with complicated topology and various scales, we use the \emph{grid network} structure. In a grid network, the underlay network is a $p\times p$ grid, and each node is able to route a packet to both the node on the right and beneath it (if exists). Fig. \ref{fig:grid_eg} shows the network with a $4$-by-$4$ underlay grid. In our experiment, we consider grid networks with $p=2,4,6,$ and $8.$
\begin{figure}[!ht]
	\subfigure{\label{fig:partite_eg}\includegraphics[width=7.5cm,height=4.5cm]{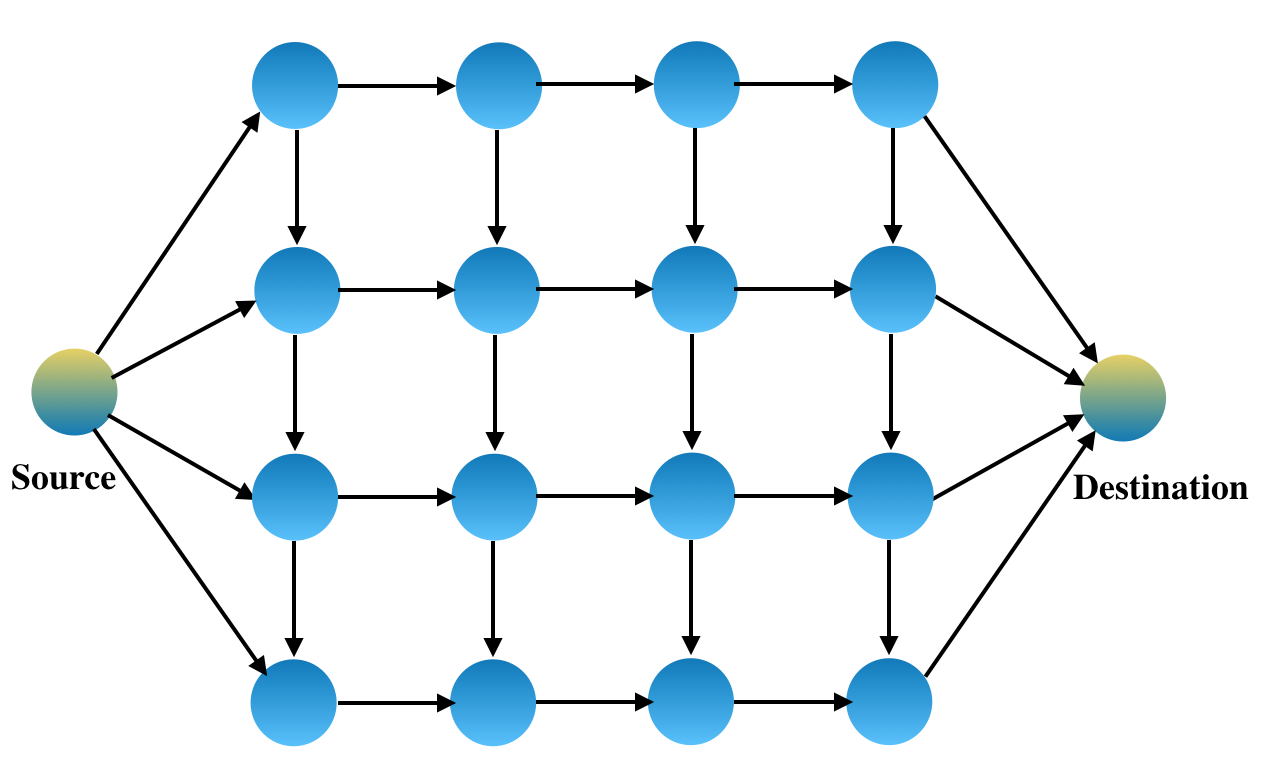}}
	\caption{Example of $4$-by$4$ grid network}	\label{fig:grid_eg}
\end{figure}
\begin{figure*}[!ht]
	\centerline{
		\hspace{-10mm}
		\subfigure[$p=2,R=0.1$]{\label{fig:grid2_01}\includegraphics[width=5cm,height=4cm]{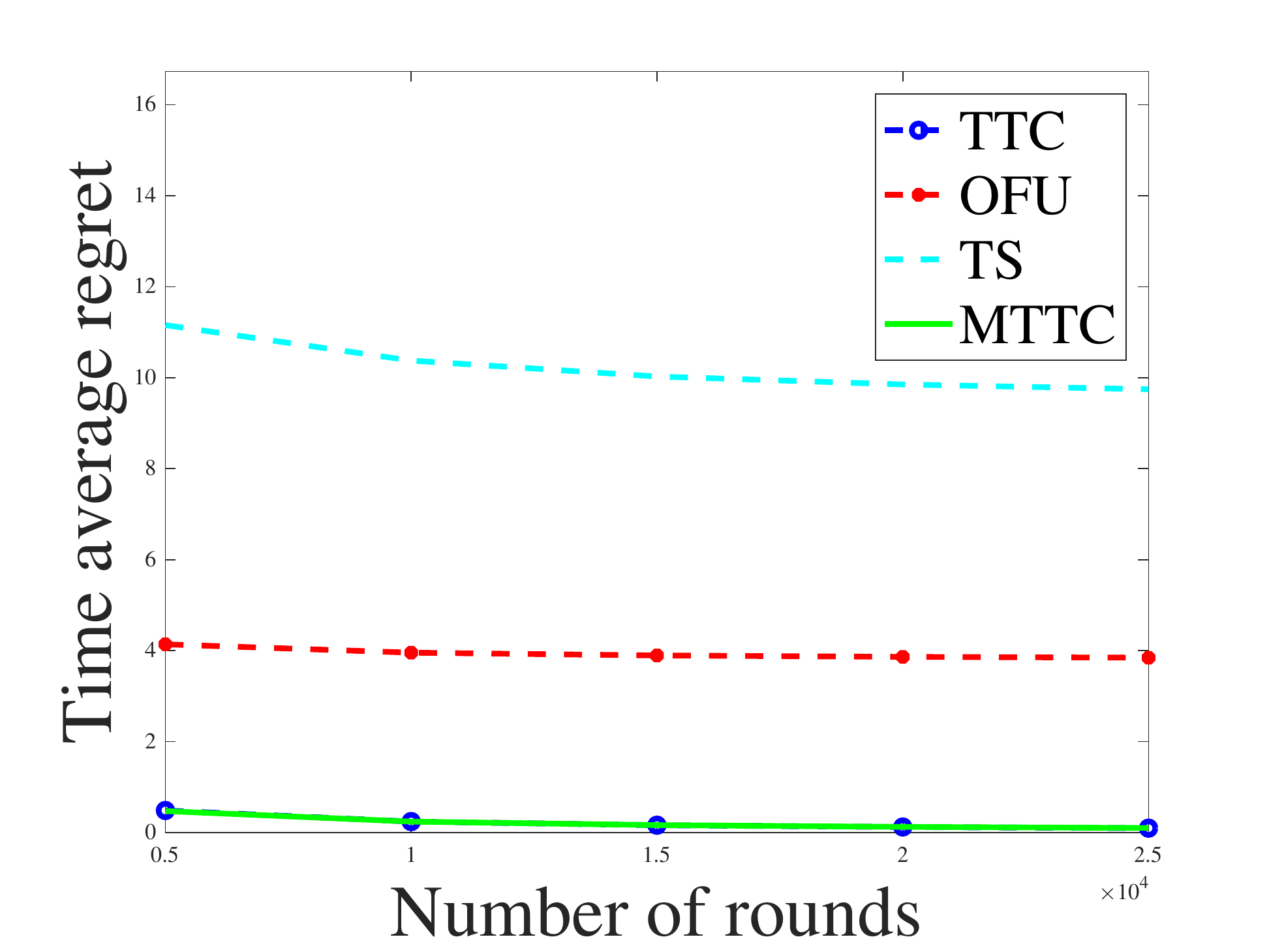}}\hspace{-5mm}
		\subfigure[$p=4,R=0.1$]{\label{fig:grid4_01}\includegraphics[width=5cm,height=4cm]{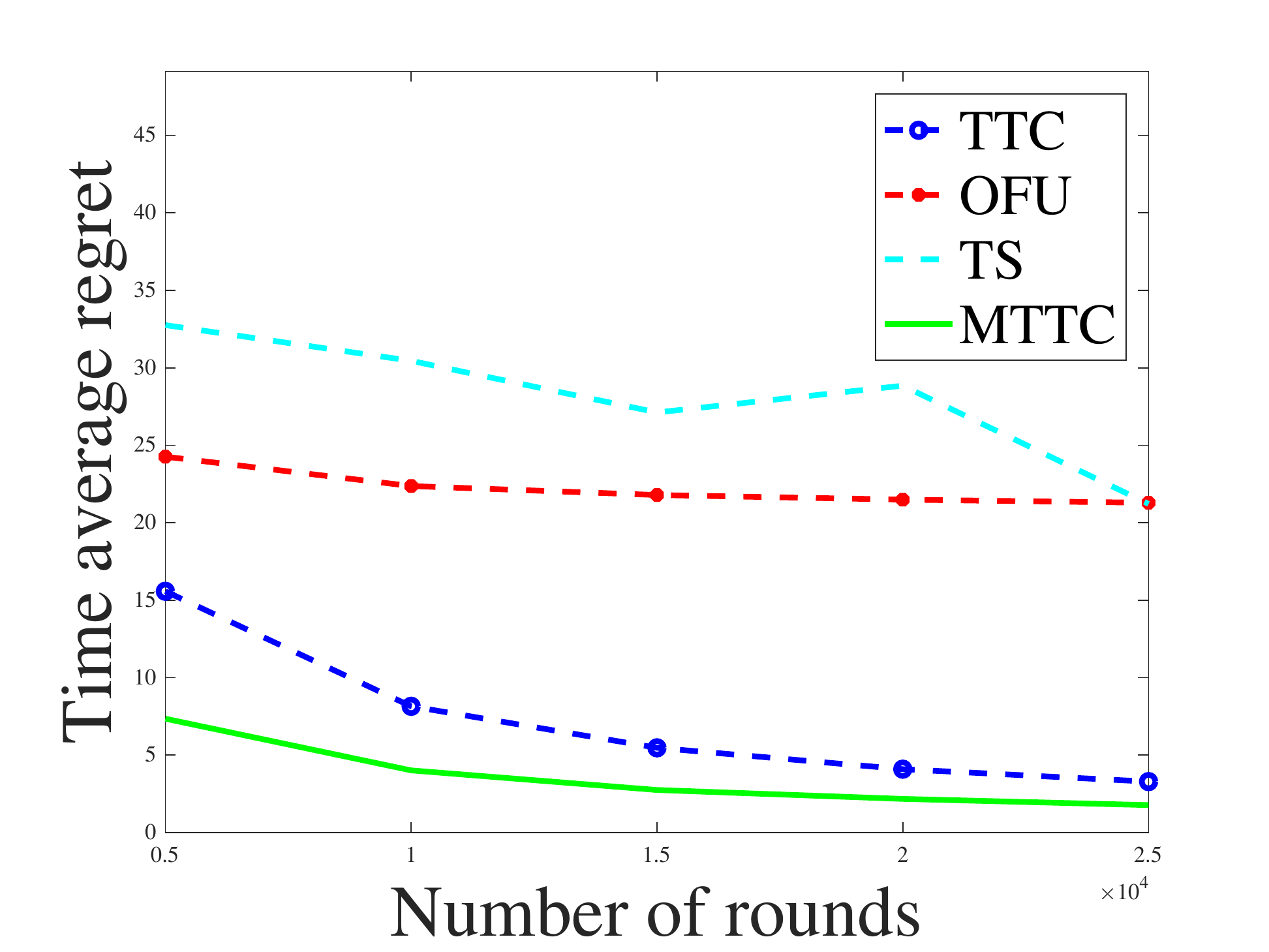}}\hspace{-5mm}
		\subfigure[$p=6,R=0.1$]{\label{fig:grid6_01}\includegraphics[width=5cm,height=4cm]{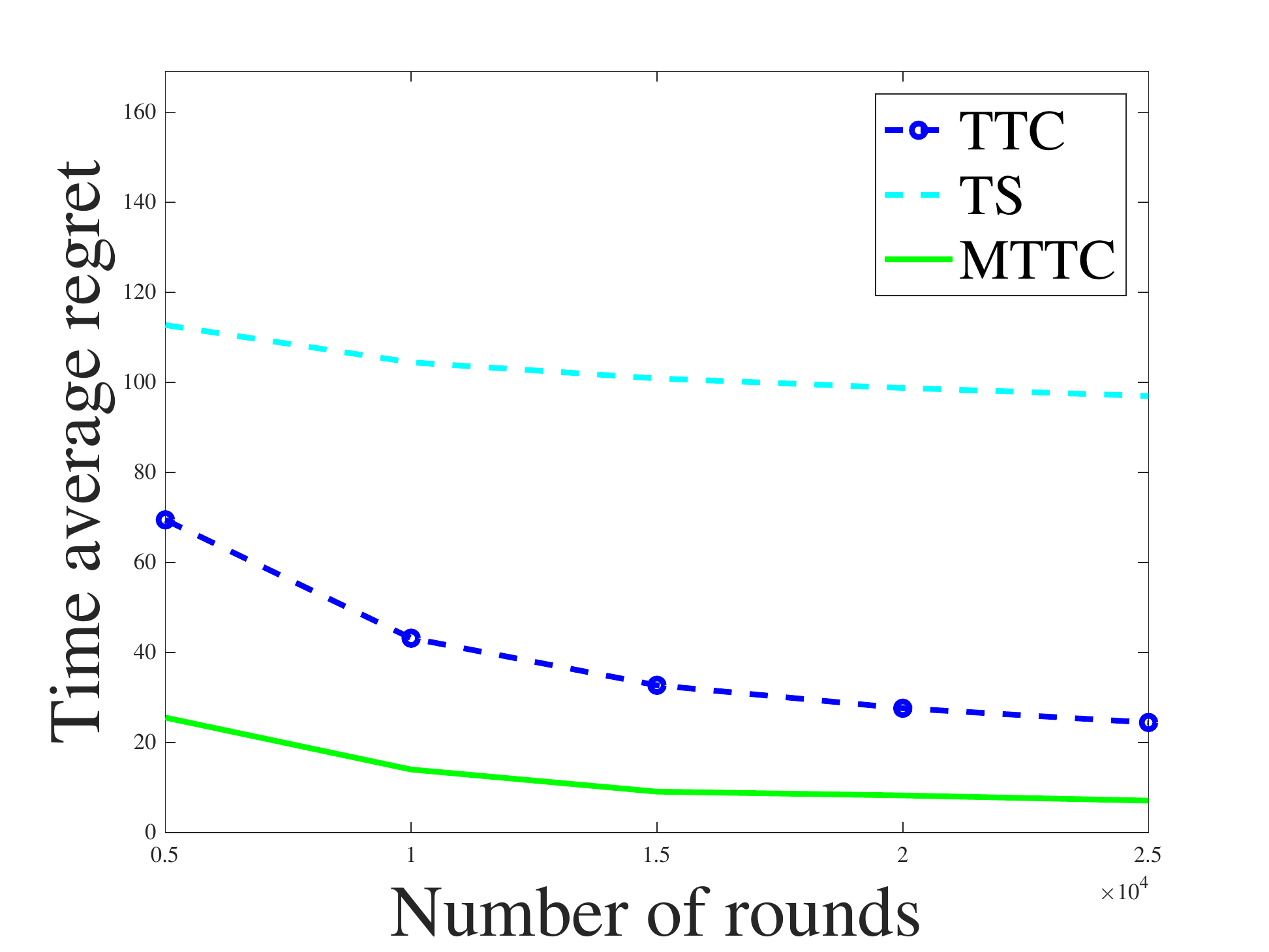}}\hspace{-5mm}
		\subfigure[$p=8,R=0.1$]{\label{fig:grid8_01}\includegraphics[width=5cm,height=4cm]{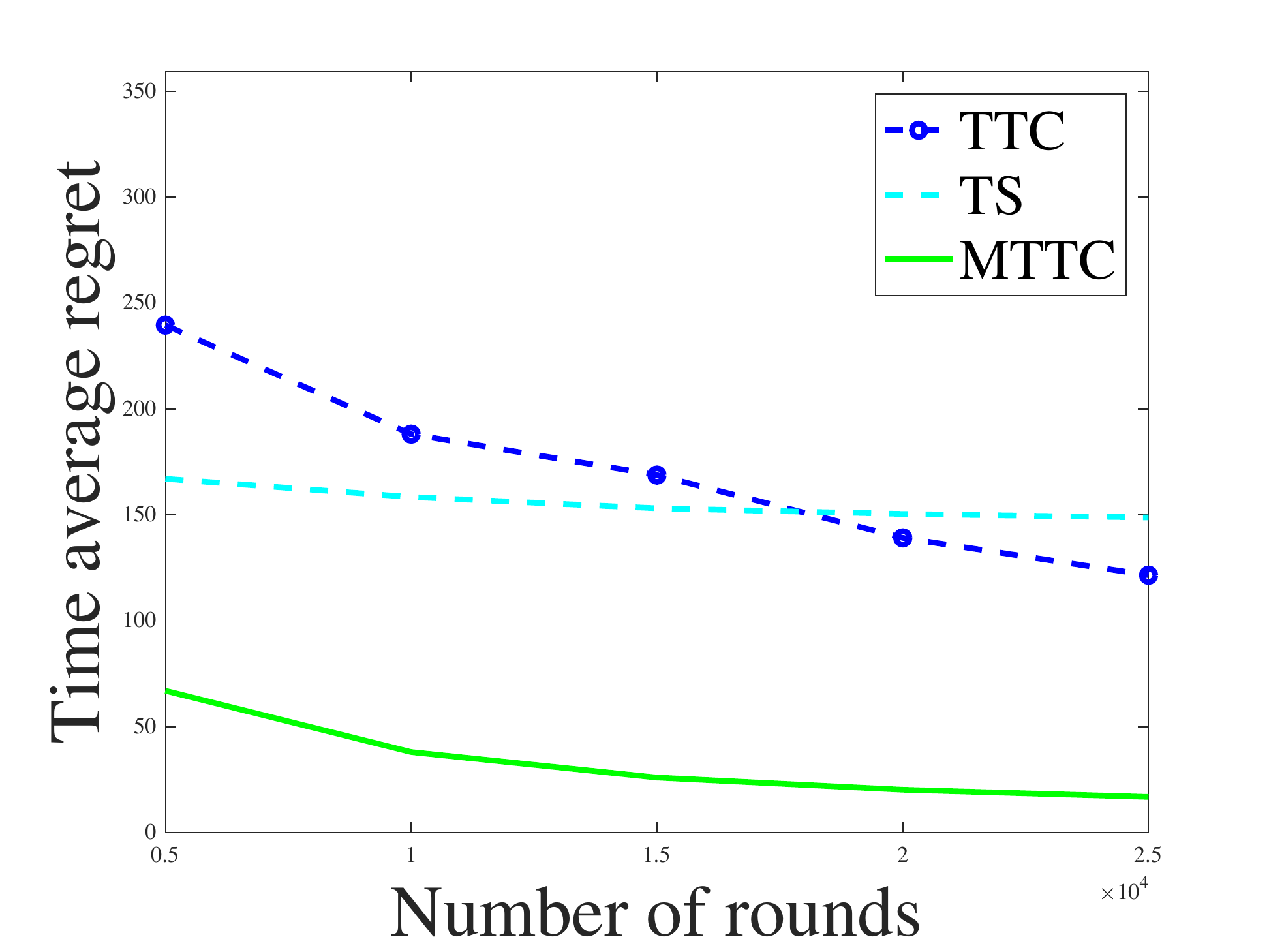}}\hspace{-10mm}}
	\vspace{-3mm}
	\centerline{
		\hspace{-10mm}
		\subfigure[$p=2,R=1$]{\label{fig:grid2_1}\includegraphics[width=5cm,height=4cm]{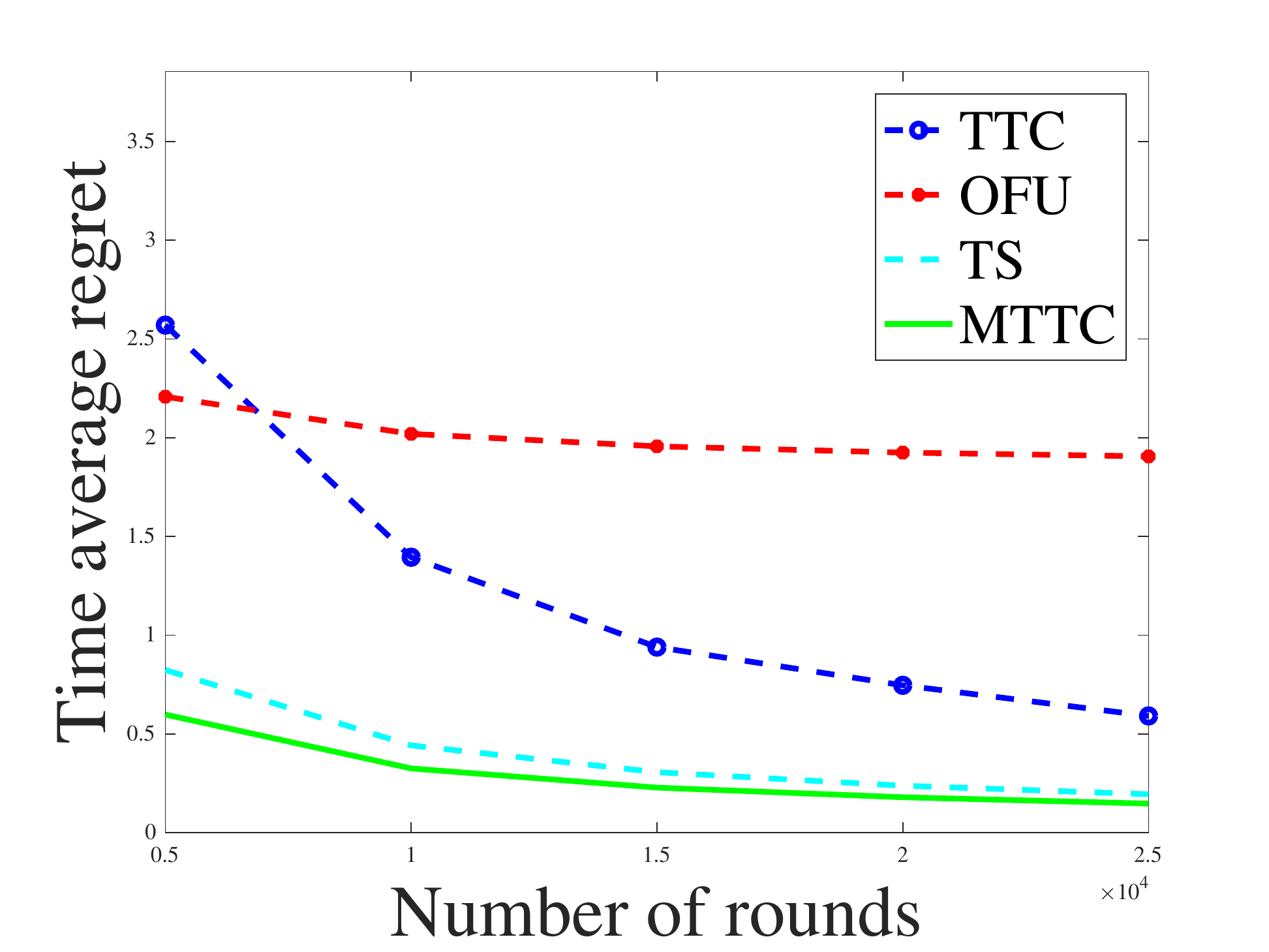}}\hspace{-5mm}
		\subfigure[$p=4,R=1$]{\label{fig:grid4_1}\includegraphics[width=5cm,height=4cm]{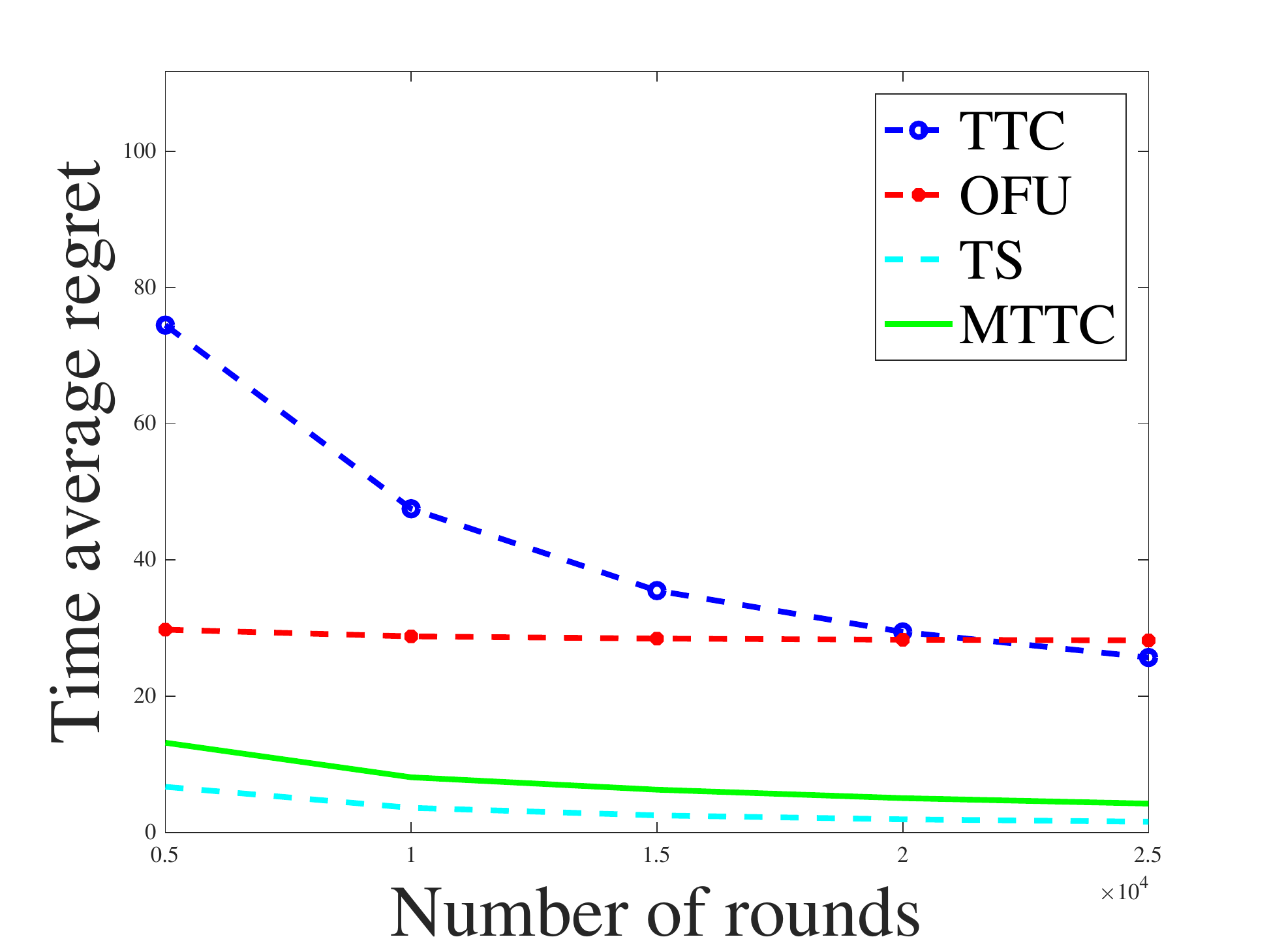}}\hspace{-5mm}
		\subfigure[$p=6,R=1$]{\label{fig:grid6_1}\includegraphics[width=5cm,height=4cm]{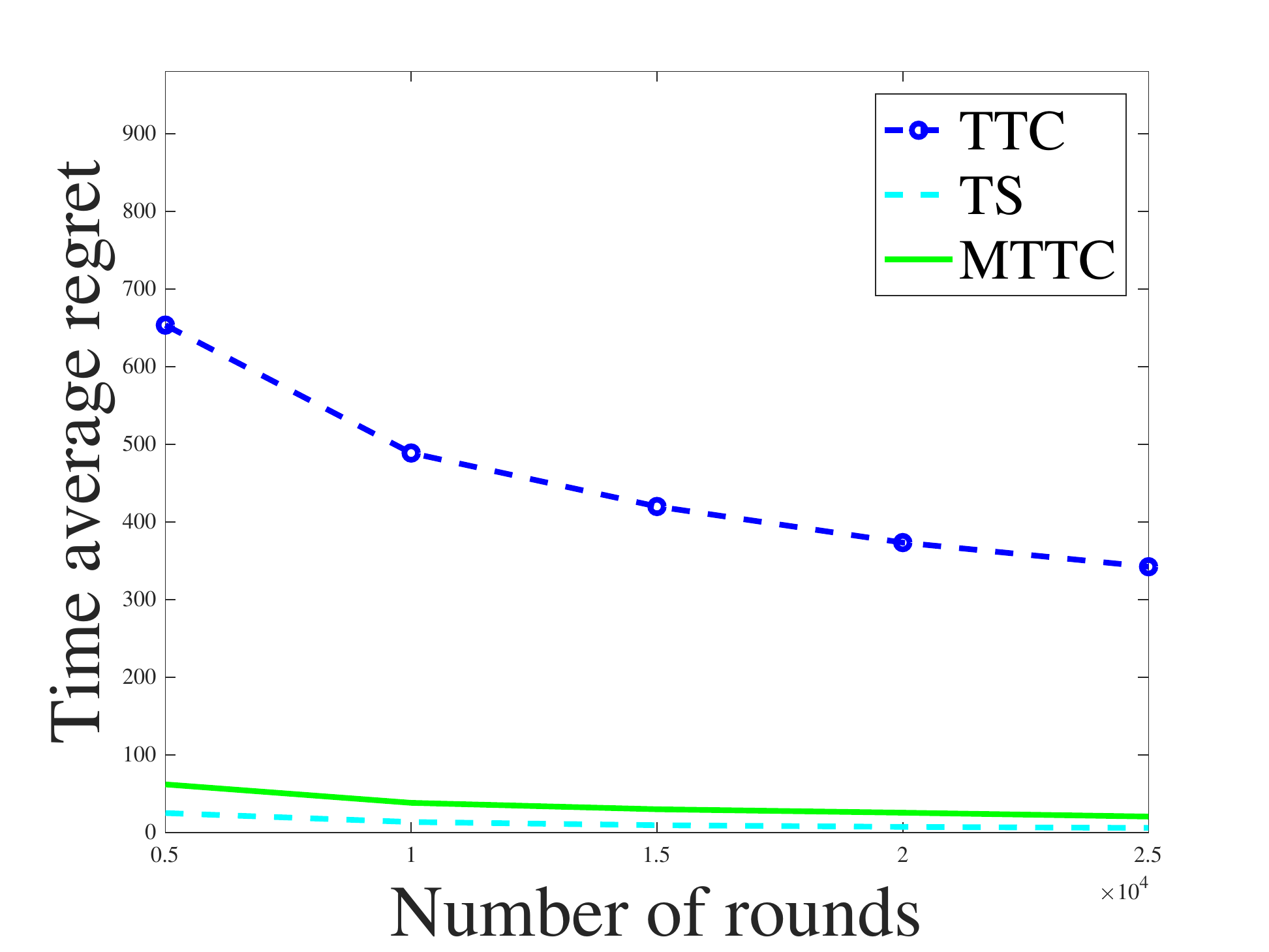}}\hspace{-5mm}
		\subfigure[$p=8,R=1$]{\label{fig:grid8_1}\includegraphics[width=5cm,height=4cm]{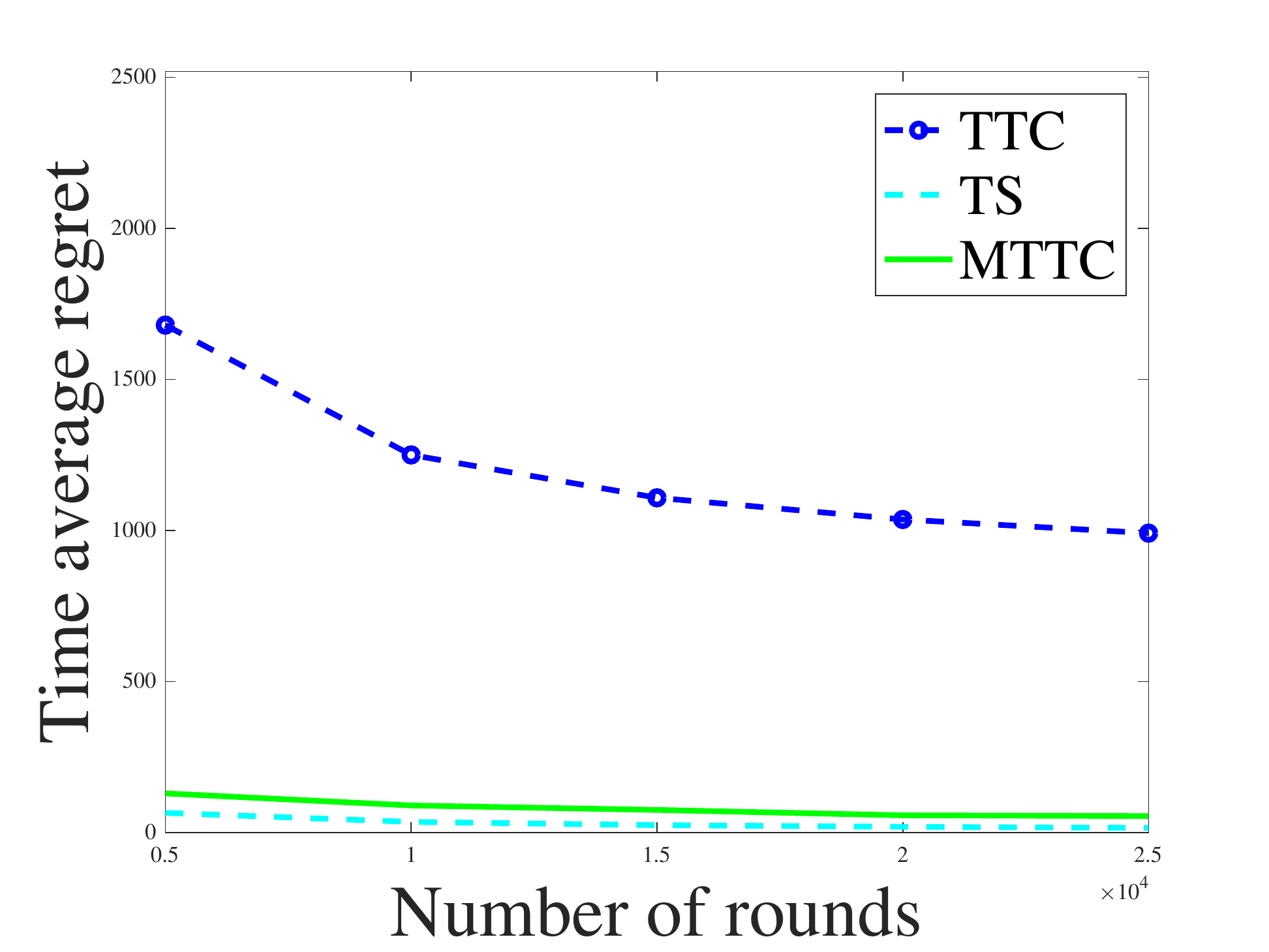}}\hspace{-10mm}}
	\vspace{-4mm}
	\caption{Plots of results}
	\label{results_grid}
\end{figure*}
For ease of implementation, we use the Thompson sampling (TS) algorithm proposed in \cite{AG13} as the alternative in \mttc, and compare the performance of our algorithms with the (inefficient) OFU algorithm proposed in \cite{AYPS11} and the Thompson sampling algorithm \cite{AG13}. Also for fair comparisons, we set a hard computation budget of 90 seconds for each instance, and call a runtime error once the runtime of an algorithm exceed this limit. Finally, all the results presented here are averaged over 200 iterations.

We first describe some basic statistics as well as the runtime of different algorithms in Table \ref{table:grid} to visualize the scales and complexities of the networks for different values of $p.$ As we can see, the OFU and the TS algorithms consume tens to hundreds of times more runtime than the \ttc~for all the cases. When $k\geq 6,$ the runtime of the OFU algorithm exceeds the computation budget.
\begin{table}[!ht]
	\begin{center}
		\begin{tabular}{ |c|c|c|c|c|}
			\hline
			$p=$&2&4&6&8\\
			\hline
			$d:$ \#links&8&32&72&128\\
			\hline
			$d_0:$ size of basis&4&16&36&64\\
			\hline
			$|\A|:$ \#paths&4&56&792&11440\\
			\hline
			minimum \#hop&3&5&7&9\\
			\hline
			maximum \#hop&4&8&12&16\\
			\hline
			runtime of \ttc~(s)&0.01&0.16&1.00&1.14\\
			\hline
			runtime of \mttc~(s)&0.10&5.78&18.65&35.08\\
			\hline
			runtime of OFU algorithm (s)&1.55&9.43&>90&>90\\
			\hline
			runtime of TS algorithm (s)&29.60&35.14&39.81&51.96\\
			\hline
		\end{tabular}
		\caption{Basic statistics and average runtime of different algorithms for grid networks when $R=1$}
		\label{table:grid}
	\end{center}
\end{table}

The results of time average regret are shown in Fig. \ref{results_grid}. From the plots, we can read that the time average regrets of the \ttc~and the \mttc~are significantly lower than those of the OFU and the TS algorithms' in the low noise (or $R=0.1$) case, especially when the number of rounds is large. The only exception is when $p=8$ and $T\leq 15000,$ the time average regret of the \ttc~is larger than that of the TS algorithm. This is because as $p$ increases, the value of $S$ also increases, and it thus takes longer time for the \ttc~to identify the optimal path. For the high noise case, we can see that the performances of the \mttc~and the TS algorithm are close. Although the performances of the \ttc~is worse than the TS algorithm, we believe the \ttc~and the \mttc~can outperform the TS algorithm as the $T$ increases.
\begin{figure}[!ht]
	\hspace{-15mm}
	\subfigure[$p=2,R=1$]{\label{fig:grid2_01}\includegraphics[width=5cm,height=4cm]{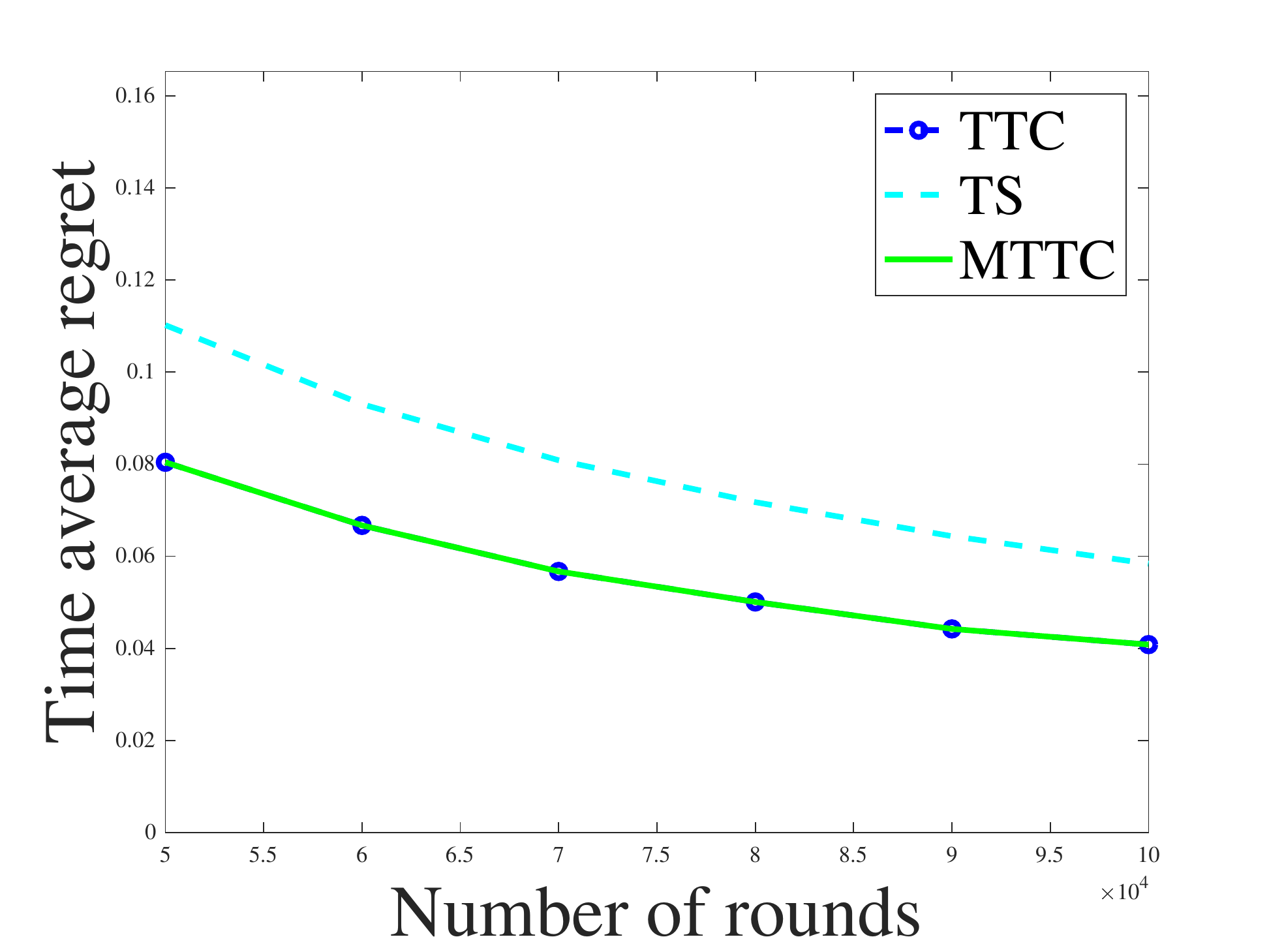}}\hspace{-5mm}
	\subfigure[$p=4,R=1$]{\label{fig:grid4_01}\includegraphics[width=5cm,height=4cm]{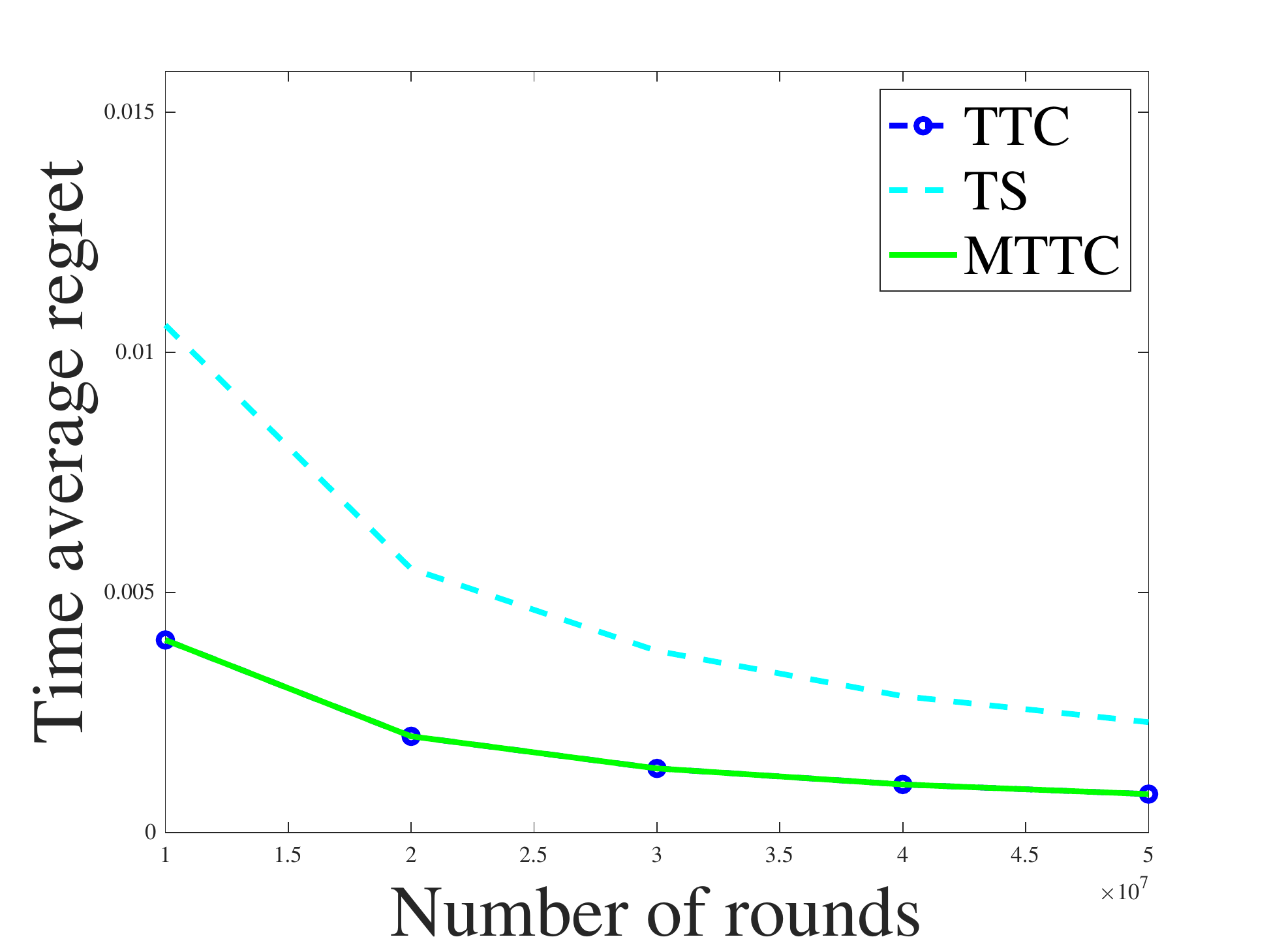}}\hspace{-15mm}
	\vspace{-4mm}
	\caption{Additional results for grid networks}
	\label{results_additional}
\end{figure}

To verify our conjecture, we conduct additional experiment for the $R=1$ case with larger $T$ with $p=2$ and $p=4$. When $p=2,$ we vary $T$ from $5\times 10^4$ to $10^5$ with a step size of $10^4;$ when $p=4,$ we vary $T$ from $10^8$ to $5\times10^8$ with a step of $10^8.$ The plots in Fig. \ref{results_additional} clearly show that when $T$ becomes large enough, the \ttc~and the \mttc~finish with lower regret than the TS algorithm. 
\section{Related Works}
\label{sec:related_works}
Stochastic multi-armed bandits is a prevalent framework for sequential decision-making. Early work on stochastic MAB problems \cite{Robbins52,LaiRobbins85,gittins2011multi} tended to be more focused on asymptotic guarantees, whereas more recent work \cite{ABF02,auer2003using} has been directed towards a non-asymptotic analysis in which regret can be bounded over a fixed time horizon $T$. Two of the best-known and well-studied techniques are known as the UCB algorithm that follows the OFU principle \cite{ABF02} and the explore then exploit algorithm \cite{AO10,S17}. Recently, the Bayesian setting accompanied by the \emph{Thompson Sampling} (TS) technique has also been thoroughly analyzed due to the ease of implementation and favorable empirical results \cite{RVR14}.

To model inter-dependence relationships among different arms, models for stochastic linear bandits have also been studied. In stochastic linear bandits, each action can be described by a finite number of features, and the expected reward/loss function is linear in these features. The reward/loss function can thus be expressed as a vector in $\Re^d$, and the uncertainty arises from the noisy feedbacks of the observed rewards/losses. In \cite{A02,CLRS11}, the authors consider stochastic linear bandits with fixed finite action sets; while in \cite{DHK08,AYAS09,RT10}, stochastic linear bandits with possibly infinite cardinality of actions has been studied. The authors of \cite{AYPS11} unify these two lines of research, and have proposed the state-of-art algorithm for the problem. All these algorithms follow essentially the OFU principle. But the OFU-inspired algorithms are impractical to run when the number of actions become large as they all require the solution of a NP-hard bilinear optimization problem. TS algorithms proposed in \cite{RVR14,AG13,AL17} are able to bypass the high computational complexities provided that the \dm~can efficiently sample from the posterior on the reward function. Unfortunately, achieving optimal regret bound via TS algorithms is possible only if the true prior over the reward/loss vector is known. To further capture the non-stochastic aspect of linear bandits, adversarial linear bandits in which the reward/loss vector can change over time arbitrarily (or even adversarially) have been studied \cite{AK04,AHR09,BE15}. Among them, \cite{BE15} gave an efficient strategy with optimal regret when the action set is convex, and one can do efficient linear optimization on the action set. Since adversarial setting is not the main topic of this paper,  interested readers can refer to \cite{BE15} and the references therein.

A special case of linear bandits is combinatorial bandits where the action set is constrained to subset of $\{0,1\}^d.$ In combinatorial stochastic bandits, it is often assumed that the reward/loss vector is observed at all the coordinates sampled by the action taken. This is the so-called semi-bandit feedback setting \cite{ABL11}. The authors of \cite{GKJ12} initiated the study of combinatorial stochastic bandits under semi-bandit feedback and a network-structured action set; while \cite{CWY13} studied the general action set case. The authors of \cite{KWAS15} further characterized tight upper and lower bounds for this problem. Assuming the noise is independent across different coordinates, the authors of \cite{TZCPJ18} improved upon the results obtained in \cite{KWAS15}. For the bandit feedback case, the authors of \cite{LZ12} give algorithms that require brute-force search over the action space with instance dependent regret $\widetilde{O}(d_0^3d\ln T/\Delta_{{\min}})$. For adversarial combinatorial bandits, the authors of \cite{ABL12} presented the efficient and optimal algorithm for the semi-bandit feedback case while the authors of \cite{HKM16} described an optimal algorithm for the bandit feedback case, but its computational complexity scales linearly with the number of actions.
\section{Conclusion}
\label{sec:conclusion}
In this paper, we developed efficient algorithms with nearly optimal regrets for the problem of stochastic online shortest path routing with end-to-end feedback. Starting with the identifiable networks, we introduced the \etc~to obtain nearly optimal instance-dependent and sub-optimal worst case regrets efficiently. We then presented the adaptive \ttc~and the \mttc~to achieve nearly optimal instance-dependent and worst case regrets. Afterwards, we extended our results to general networks. Finally, we conducted extensive numerical experiments to demonstrate the superior regret performances and computational efficiency of our proposed algorithms.
\bibliographystyle{ACM-Reference-Format}
\bibliography{IEEEabrv,paperlist,reflist-short}
\newpage
\appendix
\section{Proofs}
\label{op}
\subsection{Proof of Theorem \ref{ad}}
\label{sec:ad}
\begin{proof}
	The proof of Theorem \ref{ad} relies on a deviation inequality of the OLS estimator, which we state as a lemma here.
	\begin{lemma}
		\label{seqDesign}
		For a given positive integer $m,$ the probability that the difference between $\hbmu_m$ and $\bmu$ under the $\V_m$ norm is not less than $R\sqrt{{2d+3\ln{\delta^{-1}}}}$ is at most $\delta,$ after $m$ epochs of explorations, \ie, 
		\begin{align} \textstyle
		\nonumber\Pr\left(\|\hbmu_m-\bm{\mu}\|_{\V_m}\geq R\sqrt{2\ln (2)d+4\ln{\delta^{-1}}}\right)\leq\delta,
		\end{align}
		where $\V_m=\D_m^{\top}\D_m.$
	\end{lemma}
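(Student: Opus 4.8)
The plan is to reduce the matrix-norm deviation to a quadratic form in the noise and then control that quadratic form by a Laplace-transform (Gaussian-integral) argument with constants chosen to match the stated threshold exactly. First I would exploit the fact that the exploration over the basis $\B$ is non-adaptive, so the design matrix $\D_m$ is deterministic and, by the identifiability assumption, has full column rank $d$ (this is the same fact that makes $\D_m^\top\D_m$ invertible in eq.~(\ref{eq:ols})). Writing the responses as $\r_m=\D_m\bmu+\bm\eta$ with $\bm\eta=(\eta_1,\dots,\eta_{md})^\top$, the OLS estimator satisfies $\hbmu_m-\bmu=\V_m^{-1}\D_m^\top\bm\eta$, and substituting this into the $\V_m$-norm gives
$$\|\hbmu_m-\bmu\|_{\V_m}^2=\bm\eta^\top \D_m\V_m^{-1}\D_m^\top\bm\eta=\bm\eta^\top P\bm\eta,$$
where $P=\D_m\V_m^{-1}\D_m^\top$ is the orthogonal projection onto the column space of $\D_m$, hence symmetric, idempotent, and of rank $d$. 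Factoring $P=UU^\top$ with $U\in\Re^{md\times d}$ having orthonormal columns reduces the object of interest to $\|U^\top\bm\eta\|^2$.

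The key step is to bound the moment generating function of this quadratic form. For $\lambda R^2<1$ I would introduce an auxiliary standard Gaussian $g\sim\mathcal N(0,I_d)$ independent of $\bm\eta$ and use the identity $\E_g[\exp(\sqrt\lambda\,\langle g,U^\top\bm\eta\rangle)]=\exp(\tfrac{\lambda}{2}\|U^\top\bm\eta\|^2)$. Taking expectation over $\bm\eta$ and exchanging the order of integration (legitimate by Tonelli, since the integrand is nonnegative) yields
$$\E\!\left[\exp\!\left(\tfrac{\lambda}{2}\|U^\top\bm\eta\|^2\right)\right]=\E_g\!\left[\E_{\bm\eta}\!\left[\exp\!\left(\sqrt\lambda\,\langle Ug,\bm\eta\rangle\right)\right]\right].$$
For each fixed $g$ the vector $Ug$ is deterministic with $\|Ug\|=\|g\|$, so $\langle Ug,\bm\eta\rangle=\sum_t (Ug)_t\,\eta_t$ is a weighted sum of conditionally $R$-sub-Gaussian increments; peeling off the terms one at a time with the tower rule shows it is $R\|g\|$-sub-Gaussian, giving $\E_{\bm\eta}[\exp(\sqrt\lambda\,\langle Ug,\bm\eta\rangle)]\le\exp(\tfrac{\lambda R^2}{2}\|g\|^2)$. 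Integrating this $\chi^2$-type moment generating function over $g$ produces $(1-\lambda R^2)^{-d/2}$.

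With the bound $\E[\exp(\tfrac{\lambda}{2}\|U^\top\bm\eta\|^2)]\le(1-\lambda R^2)^{-d/2}$ in hand, I would finish via Chernoff. Taking $\lambda R^2=1/2$ gives
$$\Pr\!\left(\|U^\top\bm\eta\|^2\ge t\right)\le 2^{d/2}\exp\!\left(-\frac{t}{4R^2}\right),$$
and setting the right-hand side equal to $\delta$ and solving for $t$ yields $t=R^2(2\ln(2)d+4\ln\delta^{-1})$, which is precisely the claimed threshold after taking square roots. The main obstacle I anticipate is the careful justification of the two probabilistic manipulations: the Tonelli exchange of $\E_g$ and $\E_{\bm\eta}$, and, more delicately, the passage from the per-round conditional sub-Gaussianity of each $\eta_t$ to the $R\|g\|$-sub-Gaussianity of the whole weighted sum, which must respect the martingale filtration (conditioning on $a_{I_1},\dots,a_{I_{t-1}},\eta_1,\dots,\eta_{t-1}$) rather than assuming independence of the $\eta_t$. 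A secondary point that must be argued cleanly is that identifiability makes $P$ a genuine rank-$d$ projection, so that $\|g\|^2$ (and not a larger trace) appears in the Gaussian integral and the dimension in the exponent is exactly $d$.
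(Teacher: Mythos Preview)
Your proposal is correct and follows essentially the same Laplace/mixture argument as the paper: both reduce $\|\hbmu_m-\bmu\|_{\V_m}^2$ to the same noise quadratic form, bound its exponential moment by integrating against a Gaussian prior, and obtain $\E[\exp(\|\hbmu_m-\bmu\|_{\V_m}^2/(4R^2))]\le 2^{d/2}$ before applying Chernoff. The only cosmetic difference is that the paper works directly with $\z=\D_m^\top\bm\eta$ and integrates $\bm\alpha\sim\mathcal N(0,\V_m^{-1}/R^2)$, whereas you first pass to the orthonormal factorization $P=UU^\top$ of the hat matrix and integrate a standard Gaussian $g$; these are the same computation in different coordinates.
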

	\begin{proof}{Proof of Lemma~\ref{seqDesign}}
		The proof of Lemma~\ref{seqDesign} uses the Laplace's method, and this is mostly adopted from \cite{S16}. Following the proof in \cite{S16}, denote $\w=\r_m-\D\bmu,$ as the vector of noise, and $\z=\D_m^{\top}\w,$ we have 
		\begin{align}
		\nonumber\hbmu_m-\bmu=&\left(D_m^{\top}\D_m\right)^{-1}\D_m^{\top}\r_m-\bmu\\
		\nonumber=&\left(D_m^{\top}\D_m\right)^{-1}\D_m^{\top}\left(\w+\D_m\bmu\right)-\bmu\\
		\nonumber=&\V_m^{-1}\D_m^{\top}\w\\
		=&\V_m^{-1}\z.
		\end{align}
		Therefore, $\|\hbmu_m-\bmu\|_{\V_m}^2=\|\z\|_{\V_m^{-1}}^2.$ In order to get a tail bound on the quantity $\|\hbmu-\bmu\|_{\V_m},$ we can instead work on $\|\z\|_{\V_m^{-1}}.$ Consider $\E\left[\exp\left(\bm\alpha^{\top}\z\right)\right],$ the moment generating function of $\z$ with respect to $\bm{\alpha}\in\Re^d,$ by the $R$-sub-Gaussian property of $\eta_1,\ldots,\eta_{nd},$ 
		\begin{align}
		\nonumber&\E\left[\exp\left(\bm{\alpha}^{\top}\z_t\right)\right]\\
		\nonumber=&\E\left[\exp\left(\sum_{s=1}^{md}\left(\bm{\alpha}^{\top}\a_{I_s}\right)\eta_s\right)\right]\\
		\nonumber=&\E\left[\E_{\eta_{md}}\left[\exp\left(\sum_{s=1}^{md}\left(\bm{\alpha}^{\top}\a_{I_s}\right)\eta_s\right)\middle|\a_{I_1},\ldots,\a_{I_{md}},\eta_1,\ldots,\eta_{md-1} \right]\right]\\
		\nonumber\leq&\E\left[\exp\left(\sum_{s=1}^{md-1}\left(\bm{\alpha}^{\top}\a_{I_s}\right)\eta_s\right)\right]\exp\left(\frac{\left(\bm\alpha^{\top}\a_{I_{md}}R\right)^2}{2}\right)\\
		\nonumber&\vdots\\
		\nonumber\leq&\exp\left(\sum_{s=1}^{md}\frac{\left(\bm\alpha^{\top}\a_{I_s}R\right)^2}{2}\right)\\
		=&\exp\left(\frac{R^2}{2}\bm\alpha^{\top}\V_m\bm{\alpha}\right).
		\end{align}
		One can rewrite this to $$\E\left[\exp\left(\bm{\alpha}^{\top}\z-\frac{R^2}{2}\bm\alpha^{\top}\V_m\bm{\alpha}\right)\right]\leq1.$$ We further define $$M_{\bm{\alpha}}=\exp\left(\bm{\alpha}^{\top}\z-\frac{R^2}{2}\bm\alpha^{\top}\V_m\bm{\alpha}\right)$$ and $$\overline{M}=\int M_{\bm{\alpha}}h(\bm{\alpha})d\bm{\alpha},$$ where $h(\cdot)$ is the density of the $\mathcal{N}\left(0,\V_m^{-1}/R^2\right).$ Now we have
		\begin{align}
		\nonumber\overline{M}=&\frac{R^d}{\sqrt{(2\pi)^d\det \V_m^{-1}}}\int\exp\left(\bm{\alpha}^{\top}\z-{R^2}\bm\alpha^{\top}\V_m\bm{\alpha}\right)d\bm{\alpha}\\
		\nonumber=&\frac{R^d}{\sqrt{(2\pi)^d\det \V_m^{-1}}}\int\exp\left(\frac{\z\V_m^{-1}\z}{4R^2}-R^2\left\|\bm{\alpha}-\frac{\V_m^{-1}\z}{2R^2}\right\|^2_{\V}\right)d\bm\alpha\\
		=&\frac{1}{2^{d/2}}\exp\left(\frac{\left\|\z\right\|^2_{\V_m^{-1}}}{4R^2}\right).
		\end{align}
		Note that $$\E[\overline{M}]=\int \E[M_{\bm{\alpha}}]h(\bm{\alpha})d\bm{\alpha}\leq1,$$ we have 
		$$\E\left[\exp\left(\frac{\left\|\z\right\|^2_{\V_m^{-1}}}{4R^2}\right)\right]\leq2^{{d}/{2}},$$
		and thus
		\begin{align}
		\Pr\left(\left\|\z\right\|^2_{\V_m^{-1}}\geq R^2\left(2\ln(2)d+4\ln\delta^{-1}\right)\right)\leq\delta
		\end{align}
		by Chernoff Bound. Note that $\|\hbmu_m-\bmu\|_{\V_m}^2=\|\z\|_{\V_m^{-1}}^2,$ we conclude the proof.  
	\end{proof}
	We are now ready to proof Theorem \ref{ad}. From Lemma \ref{seqDesign}, we have the probability that the difference between $\hbmu_m$ and $\bmu$ under the $\V_m$ norm is not less than $R\sqrt{2\ln(2)d+4\ln\delta^{-1}}$ is at most $\delta,$ \ie, 
	\begin{align} \textstyle
	\nonumber\Pr\left(\|\hbmu_m-\bmu\|_{\V_m}\geq R\sqrt{2\ln(2)d+4\ln\delta^{-1}}\right)\leq\delta.
	\end{align}
	Equivalently, we have with probability at least $1-\delta.$
	\begin{align}
	\label{ad1}
	\left(\hbmu_m-\bmu\right)^{\top}\V_m\left(\hbmu_m-\bmu\right)\leq\gamma^2,
	\end{align}
	where $$\gamma=R\sqrt{2\ln(2)d+4\ln\delta^{-1}}.$$ By definition of $\B$ and $\V_m,$ we know that $\V_m=m\B\B^{\top}.$ Denoting $\x=\B^{\top}\left(\hbmu_m-\bmu\right),$ (\ref{ad1}) indicates that with probability at least $1-\delta,$
	\begin{align}
	\|\x\|^2\leq\frac{\gamma^2}{m}.
	\end{align}
	As $\B$ is the $S$-approximate barycentric spanner of $\A,$ for any $\a\in\A,$ we have,
	\begin{align}
	\nonumber\forall\a\in\A\quad\langle\a,\hbmu_m-\bmu\rangle^2=&\left(\hbmu_m-\bmu\right)^{\top}\B\bnu_{\a}\bnu_{\a}^{\top}\B^{\top}\left(\hbmu_m-\bmu\right)\\
	\nonumber=&\x^{\top}\bnu_{\a}\bnu_{\a}^{\top}\x\\
	\nonumber=&\left(\x^{\top}\bnu_{\a}\right)^2\\
	\nonumber\leq&\|\x\|^2\|\bnu_{\a}\|^2\\
	\nonumber\leq&\frac{dS^2\gamma^2}{m}
	\end{align}
	holds with probability at least $1-\delta.$ Here the second last inequality follows from Cauchy-Schwarz inequality while the last inequality follows from inequality (\ref{eq:ub_bs}). Equivalently, we have
	\begin{align*}
	\Pr\left(\exists\a\in\A:\left|\langle \a,\bmu\rangle-\langle \a,\hbmu_m\rangle\right|\geq SR\sqrt{\frac{2\ln (2)d^2+4d\ln{\delta^{-1}}}{m}}\right)\leq\delta.
	\end{align*}
	This concludes the proof.
	\end{proof}
\subsection{Proof of Theorem \ref{etcthm}}
\label{sec:etcthm}
\begin{proof}
	We prove the instance dependent and worst case regrets separately.
	
	\noindent\textbf{1. Obtaining Instance Dependent Regret:}
	To keep track of the regret incurred by \etc~in the committing stage, we consider the event $\{\a_{I_{t_0}}\neq\a_*\}$ with $t_0=n\cdot d+1.$ This means that \etc~either has a large overestimate on $\langle \a_*,\bm{\mu}\rangle$, that is 
	\begin{align} \langle \a_*,\hbmu_n\rangle\geq\langle \a_*,\bm{\mu}\rangle+\frac{\Delta_{I_{t_0}}}{2},\end{align}
	or a major underestimate on $\langle \a_{I_{t_0}},\bm{\mu}\rangle$, that is
	\begin{align} \langle \a_{I_{t_0}},\hbmu_n\rangle\leq \langle \a_{I_{t_0}},\bm{\mu}\rangle-\frac{\Delta_{I_{t_0}}}{2}.\end{align}
	By union bound, we have
	\begin{align}
	\nonumber\Pr\left(\left\{\a_{I_{t_0}}\neq\a_*\right\}\right)\leq&\Pr\left(\langle \a_*,\hbmu_n\rangle\geq\langle \a_*,\bm{\mu}\rangle+\frac{\Delta_{I_{t_0}}}{2}\right)\\
	\label{eq:etcthm1}&+\Pr\left(\langle \a_{I_{t_0}},\hbmu_n\rangle\leq \langle \a_{I_{t_0}},\bm{\mu}\rangle-\frac{\Delta_{I_{t_0}}}{2}\right).
	\end{align}
	We then work on the two quantities separately. By definition of $\Delta_{{\min}},$ we have $\Delta_{{\min}}\leq\Delta_{I_{t_0}},$ and this further indicates
	\begin{align}
	\nonumber\Pr\left(\langle \a_*,\hbmu_n\rangle\geq\langle \a_*,\bm{\mu}\rangle+\frac{\Delta_{I_{t_0}}}{2}\right)\leq&\Pr\left(\langle \a_*,\hbmu_n\rangle\geq\langle \a_*,\bm{\mu}\rangle+\frac{\Delta_{{\min}}}{2}\right)\\
	\label{eq:etcthm}\leq&\Pr\left(\langle \a_*,\hbmu_n-\bmu\rangle^2\geq\frac{\Delta^2_{I_{\min}}}{4}\right).
	\end{align}
	To this end, we apply the results from Theorem \ref{ad}. Specifically, we set 
	\begin{align}\delta=\exp\left(\frac{\ln(2)d}{2}-\frac{\Delta^2_{\min}n}{16dS^2R^2}\right)\end{align} so that
	\begin{align}\frac{\Delta_{\min}}{2}=SR\sqrt{\frac{2\ln (2)d^2+4d\ln{\delta^{-1}}}{n}}.\end{align}
	Therefore, we can further upper bound inequality (\ref{eq:etcthm}) as
	\begin{align*}
	 &\Pr\left(\langle \a_*,\hbmu_n\rangle\geq\langle \a_*,\bm{\mu}\rangle+\frac{\Delta_{I_{t_0}}}{2}\right)\\
	\leq&\Pr\left(\langle \a_*,\hbmu_n-\bmu\rangle^2\geq SR\sqrt{\frac{2\ln (2)d^2+4d\ln{\delta^{-1}}}{n}}\right)\\
	\leq&\delta.
	\end{align*}	
	Similarly,
	\begin{align*}\Pr\left(\langle \a_{I_{t_0}},\hbmu_n\rangle\leq \langle \a_{I_{t_0}},\bm{\mu}\rangle-\frac{\Delta_{I_{t_0}}}{2}\right)\leq\delta.
	\end{align*}
	Together with inequality (\ref{eq:etcthm1}), we have 
	\begin{align}
	\Pr\left(\left\{\a_{I_{t_0}}\neq\a_*\right\}\right)\leq2\delta.
	\end{align} 
	 the expected regret of \etc~is upper bounded as
	\begin{align}
	\nonumber&\E\left[\textnormal{Regret(\etc)}\right]\\
	\nonumber\leq&n\cdot d\Delta_{\max}+2(T-n\cdot d)\Delta_{\max}\delta\\
	\leq &n\cdot d\Delta_{\max}+2T\Delta_{\max}\exp\left(\frac{\ln(2)d}{2}-\frac{\Delta^2_{\min}n}{16dS^2R^2}\right).
	\end{align}
	Setting \begin{align}n=\frac{16dS^2R^2\ln(dT)+8\ln(2)d^2S^2R^2}{\Delta_{\min}^2}\end{align} brings us 
	\begin{align}
	\nonumber&\E\left[\text{Regret}_T(\text{\etc})\right]\\
	\leq&\frac{\left(16d^2S^2R^2\ln \left(dT\right)+8\ln(2)d^3S^2R^2\right)\Delta_{\max}}{\Delta_{\min}^2}+2.
	\end{align}
	
	\noindent\textbf{2. Obtaining Worst Case Regret:}
	
	We again consider the event $\{\a_{I_{t_0}}\neq\a_*\}$ with $t_0=n\cdot d+1.$ Note that this implies 
	\begin{align}
	 \a_{I_{t_0}},\hbmu\rangle\leq\langle \a_*,\hbmu\rangle.
	\end{align}
	From Theorem \ref{ad}, with probability at least $1-\delta,$
	\begin{align}
	&\langle \a_{I_{t_0}},\bmu\rangle\leq\langle \a_{I_{t_0}},\hbmu_n\rangle+SR\sqrt{\frac{2\ln (2)d^2+4d\ln{\delta^{-1}}}{n}}\\
	&\langle \a_*,\hbmu_n\rangle\leq\langle \a_*,\bmu\rangle+SR\sqrt{\frac{2\ln (2)d^2+4d\ln{\delta^{-1}}}{n}}.
	\end{align}
	Combining the above three inequalities, we have
	\begin{align}
	\nonumber\langle \a_{I_{t_0}},\bmu\rangle\leq&\langle \a_{I_{t_0}},\hbmu_n\rangle+SR\sqrt{\frac{2\ln (2)d^2+4d\ln{\delta^{-1}}}{n}}\\
	\nonumber\leq&\langle \a_*,\hbmu_n\rangle+SR\sqrt{\frac{2\ln (2)d^2+4d\ln{\delta^{-1}}}{n}}\\
	\label{eq:etcthm2}\leq&\langle \a_*,\bmu\rangle+2SR\sqrt{\frac{2\ln (2)d^2+4d\ln{\delta^{-1}}}{n}}.
	\end{align}
	Denoting the event $E_n$ as for all $\a\in\A,$ the absolute difference between $\langle\a,\hbmu_n\rangle$ and $\langle\a,\bmu\rangle$ is not larger than $SR\sqrt{{2\ln (2)d^2+4d\ln{\delta^{-1}}}/{n}},$ \ie,
	\begin{align*}
	E_n=\left\{\forall\a\in\A: \left|\langle\a,\bmu\rangle-\langle\a,\hbmu_n\rangle\right|\leq SR\sqrt{{2\ln (2)d^2+4d\ln{\delta^{-1}}}/{n}}\right\}
	\end{align*}
	By Theorem \ref{ad}, we know that
	\begin{align}
	\Pr\left(E_n\right)\geq1-\delta.
	\end{align}
	Therefore, the expected regret of \etc~is upper bounded as
	\begin{align}
	\nonumber&\E\left[\textnormal{Regret(\etc)}\right]\\
	\nonumber=&\E\left[\textnormal{Regret(\etc)}|E_n\right]\Pr(E_n)\\
	\nonumber&+\E\left[\textnormal{Regret(\etc)}|\neg E_n\right]\Pr(\neg E_n)\\
	\nonumber\leq&\E\left[\textnormal{Regret(\etc)}|E_n\right]+\E\left[\textnormal{Regret(\etc)}|\neg E_n\right]\Pr(\neg E_n)\\
	\nonumber\leq&\left(n\cdot d\Delta_{\max}+T\left(\langle \a_{I_{t_0}},\bmu\rangle-\langle \a_*,\bmu\rangle\right)\right)+T\Delta_{\max}\delta\\
	\label{eq:etcthm3}\leq&nd^2+TSR\sqrt{\frac{2\ln (2)d^2+4d\ln{\delta^{-1}}}{n}}+Td\delta\\
	\nonumber=&\widetilde{O}\left(d^{\frac{4}{3}}T^{\frac{2}{3}}\right).
	\end{align}
	Here inequality (\ref{eq:etcthm3}) follows from inequality (\ref{eq:etcthm2}), and we take $n=d^{-2/3}T^{2/3},$ and $\delta=1/(dT)$ in the last step. 
\end{proof}

\subsection{Proof of Lemma \ref{lemma:ttc_intuition}}
\label{sec:lemma:ttc_intuition}
\begin{proof}
	The discussion is conditioned on the event $E.$ throughout the proof On one hand, if (\ref{checking}) holds, then the detected path $\a_k$ cannot be optimal, \ie, given $$\langle\a_k,\hbmu_{m}\rangle-\langle\tilde{\a}_m,\hbmu_m\rangle\geq2\tilde{\Delta}_m,$$ we have
	\begin{align}
	\label{checking1}\langle\a_k,\bmu\rangle\geq&\langle\a_k,\hbmu_{m}\rangle-\tilde{\Delta}_m\\
	\label{checking2}>&\langle\tilde{\a}_m,\hbmu_m\rangle+\tilde{\Delta}_m\\
	\label{checking3}\geq&\langle\tilde{\a}_m,\bmu\rangle,
	\end{align} 
	where inequalities (\ref{checking1}) and (\ref{checking3}) hold because we have conditioned on $E,$ and inequality (\ref{checking2}) follows from re-arranging the terms in (\ref{checking}). This implies that routing a packet via path $\a_k$ incurs more delay than that of $\tilde{a}_m,$ and it thus cannot be optimal; On the other hand, this criterion (\ref{checking}) also promises that any path $\a_k$ with a gap $$\Delta_k>4\tilde{\Delta}_m$$ is detected after epoch $m,$ \ie,
	\begin{align}
	\label{checking4}\langle\a_k,\hbmu_{m}\rangle-\tilde{\Delta}_m\geq&\langle\a_k,\bmu\rangle-2\tilde{\Delta}_m\\
	\label{checking5}=&\langle\a_*,\bmu\rangle+\Delta_k-2\tilde{\Delta}_m\\
	\label{checking6}>&\langle\a_*,\bmu\rangle+2\tilde{\Delta}_m\\
	\label{checking7}\geq&\langle\a_*,\hbmu_m\rangle+\tilde{\Delta}_m\\
	\label{checking8}\geq&\langle\tilde{\a}_m,\hbmu_m\rangle+\tilde{\Delta}_m
	\end{align}
	where inequalities (\ref{checking4}) and (\ref{checking7}) hold because we have conditioned on $E,$ equality (\ref{checking5}) holds by definition of $\Delta_k,$  inequality (\ref{checking6}) follows from the assumption that $\Delta_k>4\tilde{\Delta}_m,$ and inequality (\ref{checking8}) follows from the optimality of $\tilde{\a}_m.$ This is equivalent to 
	\begin{align}
	\langle\a_k,\hbmu_{m}\rangle-\langle\tilde{\a}_m,\hbmu_m\rangle>&2\tilde{\Delta}_m,
	\end{align}
	and $\a_k$ is detected.
\end{proof}

\subsection{Proof of Theorem \ref{ttcthm}}
\label{sec:ttcthm}
\begin{proof}
	We begin by decomposing the regret as following
	\begin{align}
	\nonumber&\E\left[\textnormal{Regret}_T\left(\text{\ttc}\right)\right]\\
	\nonumber=&\E\left[\textnormal{Regret}_T\left(\text{\ttc}\right)| E\right]\Pr(E)\\
	\nonumber&\quad+\E\left[\textnormal{Regret}_T\left(\text{\ttc}\right)|\neg E\right]\Pr(\neg E)\\
	\nonumber\leq&\E\left[\textnormal{Regret}_T\left(\text{\ttc}\right)| E\right]\\
	\nonumber&\quad+\frac{T^2\delta}{d}\E\left[\textnormal{Regret}_T\left(\text{\ttc}\right)|\neg E\right],
	\end{align}
	and then distinguish the following two cases:
	
	\noindent\textbf{1. Analyzing $\E\left[\textnormal{Regret}_T\left(\text{\ttc}\right)|E\right]$}
	
	Under this case, all the sub-optimal arms should be eliminate when $$\tilde{\Delta}_m\leq\Delta_{\min}/4,$$ or $$m=(256dR^2\ln\delta^{-1}+128\ln(2)d^2R^2)/\Delta^2_{\min}.$$ Therefore,
	\begin{align}
	\nonumber&\E\left[\textnormal{Regret}_T\left(\text{\ttc}\right)|\neg E\right]\\
	\leq&\frac{256d^2R^2\Delta_{\max}\ln\delta^{-1}+128\ln(2)d^3R^2\Delta_{\max}}{{\Delta}_{\min}^2}.
	\end{align}
	
	\noindent\textbf{2. Analyzing $T^2\delta\E\left[\textnormal{Regret}_T\left(\text{\ttc}\right)|\neg E\right]$}
	
	We know that the regret of each round is at most $\Delta_{\max},$ and the total regret can be trivially upper bounded by $T\Delta_{\max}\leq Td.$ Therefore,
	\begin{align}
	\frac{T\delta}{d}\E\left[\textnormal{Regret}_T\left(\text{\ttc}\right)| E\right]\leq T^2\delta.
	\end{align}
	
	Combining the above two cases, we can sett $\delta$ to $T^{-2},$ and have
	\begin{align}
	\nonumber&\E\left[\textnormal{Regret}_T\left(\text{\ttc}\right)\right]\\
	\leq&\frac{512d^2R^2\Delta_{\max}\ln T+128\ln(2)d^3R^2\Delta_{\max}}{{\Delta}_{\min}^2}+1.
	\end{align} 
\end{proof}
\subsection{Proof of Theorem \ref{mttcthm}}
\label{sec:mttcthm}
\begin{proof}
	Conditioned on $E,$ The first half of the theorem follows directly from Theorem \ref{ttcthm}. Now suppose $\Delta_{\min}\leq d^{3/2}{T^{-1/4}},$ then the \mttc~switches to the alternative. The regret of in the first $n$ epochs can be upper bounded as 
	\begin{align}
	\Delta_{\max}d\overline{n}\leq d^2\overline{n}=\sqrt{T}R^2\left(32\ln (2)d+32\ln{T}\right).
	\end{align}
	The regret of running the alternative can thus be upper bounded as $C'd\sqrt{T\ln T}$ with some absolute constant $C'.$ Therefore, the worst case regret of the \mttc~can be upper bounded as
	\begin{align}
	\nonumber&\E\left[\textnormal{Regret}_T\left(\text{\ttc}\right)\right]\\
	\nonumber=&\E\left[\textnormal{Regret}_T\left(\text{\ttc}\right)| E\right]\Pr(E)\\
	\nonumber&\quad+\E\left[\textnormal{Regret}_T\left(\text{\ttc}\right)|\neg E\right]\Pr(\neg E)\\
	\nonumber\leq&\E\left[\textnormal{Regret}_T\left(\text{\ttc}\right)| E\right]\\
	\nonumber&\quad+\frac{T^2\delta}{d}\E\left[\textnormal{Regret}_T\left(\text{\ttc}\right)|E\right]\\
	\nonumber\leq&C'd\sqrt{T\ln T}+\frac{T^2\delta}{d}Td\\
	\nonumber\leq&C'd\sqrt{T\ln T}+1,
	\end{align}
	where we have again chosen $\delta=T^{-3}.$
\end{proof}
\subsection{Proof of Lemma \ref{general_nework:correctness}}
\label{sec:general_nework:correctness}
\begin{proof}
	We first note that in each iteration of while-loop in Algorithm \ref{general_network:alg}, either $u$ is increased by $1$ or the variable Flag is set to False. Therefore, after at most $d$ iterations, the algorithm terminates. We then see that the statement $\B$ has linearly independent columns holds trivially as $\C_u$ has linearly independent columns by virtue of our algorithm, and $\B$ is just a sub-matrix of $\C_u.$ 
	
	As an intermediate step, we show $\text{rank}(\B)=\text{rank}(\A).$ Since $\B$ is always a subset of $\A,$ the rank of $\B$ cannot exceed that of $\A.$ Now if $\text{rank}(\B)<\text{rank}(\A)=d_0,$ then there must exists an $\a\in\A,$ such that $\a$ is linearly independent of the columns of $\B.$ We declare that this is impossible once the algorithm terminate after $u$ iterations. Upon termination, the matrix $(\a,\C_u)$ has rank $d$ as $\C_u$ is full rank. By definition of $\a,$ the rank of $(\a,\B)$ is $\text{rank}(B)+1\leq d_0<d,$ and therefore, we must be able to start from $(\a,\B),$ and add columns from $\C_u(:,[u+1,d])$ to form a $d$-by-$d$ full rank matrix by basis extension theorem. This is equivalent to replace a column of $\C_u(:,[u+1,d])$ with $\a$ while keeping resulted matrix full rank, which is exactly step (2) of the procedure. This means the algorithm should not terminate, and it is a contradiction. Therefore, $\text{rank}(\B)=\text{rank}(\A).$
	
	Now for any $\a\in\A,$ if $\a$ cannot be expressed as linearly combination of columns of $\B,$ then adding $\a$ to $\B$ has rank $d_0+1.$ As $\left(\B,\a\right)$ is a sub-matrix of $\A,$ we have $$d_0+1=\text{rank}\left(\B,\a\right)\leq\text{rank}(\A)=d_0,$$ which is a contradiction.
\end{proof}

\subsection{Proof of Lemma \ref{ad0}}
\label{sec:ad0}
\begin{proof}
	We first show that the noise vector $\w_{t}=(\eta_1,\ldots,\eta_{t})$ is $R$-sub-Gaussian for all $t\in[T]$. It is easy to see that $\E\left[\w_t\right]=0,$ and for any $\bm{\alpha}\in\Re^{t},$ we have
	\begin{align}
	\nonumber&\E\left[\exp\left(\bm{\alpha}^{\top}\w_t\right)\right]\\
	\nonumber=&\E\left[\E\left[\exp\left(\sum_{s=1}^{t-1}\alpha_s\eta_s+\alpha_t\eta_t\right)\middle|\a_{I_1},\ldots,\a_{I_t},\eta_1,\ldots,\eta_{t-1}\right]\right]\\
	\nonumber\leq&\E\left[\E\left[\exp\left(\frac{\alpha_t^2R^2}{2}\right)\exp\left(\sum_{s=1}^{t-1}\alpha_s\eta_s\right)\middle|\a_{I_1},\ldots,\a_{I_t},\eta_1,\ldots,\eta_{t-1}\right]\right]\\
	\nonumber=&\exp\left(\frac{\alpha_t^2R^2}{2}\right)\E\left[\exp\left(\sum_{s=1}^{t-1}\alpha_s\eta_s\right)\right]\\
	\nonumber&\vdots\\
	=&\exp\left(\frac{\|\bm{\alpha}\|^2R^2}{2}\right).
	\end{align}
	Now from Theorem 2.2 of \cite{RH17}, we have the probability that the difference between $\hbmu_m$ and $\bmu$ under the $\V_m$ norm is not less than $R\sqrt{32\ln (6)d_0+32\ln{\delta^{-1}}}$ is at most $\delta,$ \ie, 
	\begin{align} \textstyle
		\nonumber\Pr\left(\|\hbmu_m-\bmu\|_{\V_m}\geq R\sqrt{32\ln (6)d_0+32\ln{\delta^{-1}}}\right)\leq\delta.
	\end{align}
	Equivalently, we have 
	\begin{align}
	\label{ad01}
	\left(\hbmu_m-\bmu\right)^{\top}\V_m\left(\hbmu-\bmu\right)\leq\gamma^2,
	\end{align}
	where $$\gamma=R\sqrt{32\ln (6)d_0+32\ln{\delta^{-1}}}.$$ By definition of $\B$ and $\V_m,$ we know that $\V_m=m\B\B^{\top}.$ Denoting $$\x=\B^{\top}\left(\hbmu_m-\bmu\right),$$ (\ref{ad01}) indicates with probability at least $1-\delta,$
	\begin{align}
	\|\x\|^2\leq\frac{\gamma^2}{m}.
	\end{align}
	As $\B$ is the $S$-approximate barycentric spanner of $\A,$ for any $\a\in\A,$ there exists some $\bnu_{\a}\in[-S,S]^{d_0},$ such that $\B\bnu_{\a}=\a.$ Therefore,
	\begin{align}
	\nonumber\forall\a\in\A\quad\langle\a,\hbmu_m-\bmu\rangle^2=&\left(\hbmu_m-\bmu\right)^{\top}\B\bnu_{\a}\bnu_{\a}^{\top}\B^{\top}\left(\hbmu_m-\bmu\right)\\
	\nonumber=&\x^{\top}\bnu_{\a}\bnu_{\a}^{\top}\x\\
	\nonumber=&\left(\x^{\top}\bnu_{\a}\right)^2\\
	\nonumber\leq&\|\x\|^2\|\bnu_{\a}\|^2\\
	\nonumber\leq&\frac{d_0S^2\gamma^2}{m}
	\end{align}
	holds with probability at least $1-\delta.$ Here the second last inequality follows from Cauchy-Schwarz inequality. This is equivalent to
	\begin{align*}
	\Pr\left(\left|\langle \a,\bmu\rangle-\langle \a,\hbmu\rangle\right|\geq SR\sqrt{\frac{32\ln (6)d_0^2+32d_0\ln{\delta^{-1}}}{m}}\right)\leq\delta.
	\end{align*}
\end{proof}
\section{Algorithm for Finding Barycentric Spanners}
\label{sec:barycentric_app}
In this section, we briefly state the algorithm from \cite{AK04} to illustrate how to construct a $S$-approximate barycentric spanner when the network is identifiable. The detail is shown in Algorithm \ref{alg:barycentric}. Please refer to Section \ref{sec:matrix_notation} for the matrix notations.  Here, $\B(:,-j)$ is the matrix $\B$ with the $j^{\text{th}}$ column removed. Each iteration of the for- and while-loop requires two quantities, \ie,
\begin{align}
&\argmax_{\a\in\A}\det(\a,\B(:,-j)),\\
&\argmax_{\a\in\A}-\det(\a,\B(:,-j)),
\end{align}
to compute $\argmax_{\a\in\A}\det|(\a,\B(:,-j))|,$ This can be done by two calls to the longest path algorithm for directed acyclic graphs. Specifically, the each coefficient of the linear function $\det(\a,\B(:,-j)$ is the determinant of a $(d-1)$-by-$(d-1)$ sub-matrix of $\B(:,−j)$ and can therefore be computed efficiently. Afterwards, we can set the each link's delay to the corresponding coefficient in $G,$ and run the longest path algorithm over this network to find $\a.$
 \begin{algorithm}[!ht]
	\caption{Basis Identification for Identifiable Networks \cite{AK04}}
	\label{alg:barycentric}
	\begin{algorithmic}[1]
		\State \textbf{Input:} A set of paths $\A.$
		\State \textbf{Initialization:} $\B\leftarrow I.$
		\For{$j=1,\ldots,d$}
		\State{$B(:,j)\leftarrow\argmax_{\a\in\A}\det|\left(\a,\B(:,-j)\right)|.$}
		\EndFor
		\While{$\exists \a\in\A,j\in[d]$ \st, $\det|\left(\a,\B(:,-j)\right)|>S\det|\B|$ }
		\State{$\B(:,j)\leftarrow\a.$}
		\EndWhile
		\Return{$\B.$}
	\end{algorithmic}
\end{algorithm}
\begin{comment}
Table \ref{table:related_works} gives a brief summary of state-of-art results for linear bandits and combinatorial bandits.

\begin{table}[!ht]
\begin{center}
\begin{tabular}{ |c|c|c|c|c|c| }
\hline
&Instance-dependent regret&Worst case regret&Efficient\\
\hline
\cite{AYPS11}&$O\left(\frac{(\ln T+d\ln\ln T)^2\ln T}{\Delta_{\min}}\right)$&$\widetilde{O}\left(d\sqrt{T}\right)$&$\times$\\
\hline
\cite{AG13,AL17}&N.A.&$\widetilde{O}\left(d^{3/2}\sqrt{T}\right)$&\checked\\
\hline
\cite{BE15}&N.A.&$\widetilde{O}\left(d\sqrt{T}\right)$&\checked\\
\hline
\cite{HKM16}&N.A.&$\widetilde{O}\left(l^{3/2}\sqrt{dT}\right)$&$\times$\\
\hline
Our work&$O\left(\frac{\left(d^2\ln{T}+d^3\right)\Delta_{\max}}{\Delta^2_{\min}}\right)$&$\widetilde{O}\left(d\sqrt{T}\right)$&\checked\\ 
\hline
\end{tabular}
\caption{Regrets of stochastic online shortest path routing with end-to-end feedback with existing algorithms.}
\label{table:related_works}
\end{center}
\end{table}
\end{comment}
\end{document}